\newcommand{\varf}{V_{\omega}}
\newcommand{\inner}[2]{\left\langle #1,#2\right\rangle}
\newtheorem{theorem}{Theorem}
\title{$f$-GAN: Training Generative Neural Samplers using Variational
Divergence Minimization}
\author{
Sebastian Nowozin\\
Machine Intelligence and Perception Group\\
Microsoft Research\\
Cambridge, UK\\
\texttt{Sebastian.Nowozin@microsoft.com}\\
\And
Botond Cseke\\
Machine Intelligence and Perception Group\\
Microsoft Research\\
Cambridge, UK\\
\texttt{botcse@microsoft.com}
\And
Ryota Tomioka\\
Machine Intelligence and Perception Group\\
Microsoft Research\\
Cambridge, UK\\
\texttt{ryoto@microsoft.com}
}
\begin{document}

\maketitle

\begin{abstract}
Generative neural samplers are probabilistic models that implement sampling
using feedforward neural networks: they take a random input vector and produce
a sample from a probability distribution defined by the network weights.
These models are expressive and allow efficient computation of samples and
derivatives, but cannot be used for computing likelihoods or for marginalization.
The \emph{generative-adversarial} training method allows to train such models
through the use of an auxiliary discriminative neural network.
We show that the generative-adversarial approach is a special case of an
existing more general variational divergence estimation approach.
We show that any $f$-divergence can be used
for training generative neural samplers.
We discuss the benefits of various choices of divergence functions on training
complexity and the quality of the obtained generative models.
\end{abstract}

\section{Introduction}
Probabilistic generative models describe a probability distribution over a
given domain $\mathcal{X}$, for example a distribution over natural language
sentences, natural images, or recorded waveforms.

Given a generative model $Q$ from a class $\mathcal{Q}$ of possible models we
are generally interested in performing one or multiple of the following
operations:
\begin{compactitem}
\item \emph{Sampling.}  Produce a sample from $Q$.  By inspecting samples or
calculating a function on a set of samples we can obtain important insight
into the distribution or solve decision problems.
\item \emph{Estimation.}  Given a set of iid samples $\{x_1,x_2,\dots,x_n\}$ from
an unknown true distribution $P$, find $Q \in \mathcal{Q}$ that best describes
the true distribution.
\item \emph{Point-wise likelihood evaluation.}  Given a sample $x$, evaluate
the likelihood $Q(x)$.
\end{compactitem}

\emph{Generative-adversarial networks} (GAN) in the form proposed
by~\cite{goodfellow2014generativeadversarial} are an expressive class of
generative models that allow exact sampling and approximate estimation.
The model used in GAN is simply a feedforward neural network which receives as input a
vector of random numbers, sampled, for example, from a uniform distribution.
This random input is passed through each layer in the network and the final
layer produces the desired output, for example, an image.
Clearly, sampling from a GAN model is efficient because only one forward pass
through the network is needed to produce one exact sample.

Such probabilistic feedforward neural network models were first considered
in~\cite{mackay1995bayesianneuralnetworks} and~\cite{bishop1998gtm},
here we call these models \textbf{generative neural samplers}.
GAN is also of this type, as is the decoder model of a variational
autoencoder~\cite{kingma2013vae}.

In the original GAN paper the authors show that it is possible to estimate
neural samplers by approximate minimization of the symmetric
\emph{Jensen-Shannon divergence},
\begin{equation}
D_{\textrm{JS}}(P\|Q) = \tfrac{1}{2} D_{\textrm{KL}}(P\|\tfrac{1}{2}(P+Q))
	+ \tfrac{1}{2} D_{\textrm{KL}}(Q\| \tfrac{1}{2}(P+Q)),
\end{equation}
where $D_{KL}$ denotes the Kullback-Leibler divergence.
The key technique used in the GAN training is that of introducing a second
``\emph{discriminator}'' neural networks which is optimized simultaneously.
Because $D_{\textrm{JS}}(P\|Q)$ is a proper divergence measure between
distributions this implies that the true distribution $P$ can be approximated
well in case there are sufficient training samples and the model class
$\mathcal{Q}$ is rich enough to represent $P$.

In this work we show that the principle of GANs is more general and we can
extend the variational divergence estimation framework proposed by Nguyen
et al.~\cite{nguyen2010divergenceestimation} to recover the GAN training
objective and generalize it to arbitrary $f$-divergences.

More concretely, we make the following contributions over the state-of-the-art:
\begin{compactitem}
\item We derive the GAN training objectives for all $f$-divergences and
provide as example additional divergence functions, including the
Kullback-Leibler and Pearson divergences.
\item We simplify the saddle-point optimization procedure of Goodfellow et
al.~\cite{goodfellow2014generativeadversarial} and provide a theoretical
justification.
\item We provide experimental insight into which divergence function is
suitable for estimating generative neural samplers for natural images.
\end{compactitem}

\section{Method}
We first review the divergence estimation framework of Nguyen et
al.~\cite{nguyen2010divergenceestimation} which is based on $f$-divergences.
We then extend this framework from divergence estimation to model estimation.

\subsection{The f-divergence Family}
Statistical divergences such as the well-known \emph{Kullback-Leibler
divergence} measure the difference between two given probability
distributions.
A large class of different divergences are the so called
$f$-divergences~\cite{csiszar2004informationtheory,liese2006divergences}, also
known as the Ali-Silvey distances~\cite{ali1966divergence}.
Given two distributions $P$ and $Q$ that possess, respectively, an absolutely
continuous density function $p$ and $q$ with respect to a base measure $\textrm{d}x$
defined on the domain $\mathcal{X}$, we define the \emph{$f$-divergence},
\begin{equation}
D_f(P \| Q) = \int_{\mathcal{X}} q(x)
	f\left(\frac{p(x)}{q(x)}\right)\,\textrm{d}x,
\label{eqn:f-divergence}
\end{equation}
where the \emph{generator function} $f: \mathbb{R}_+ \to \mathbb{R}$ is a
convex, lower-semicontinuous function satisfying $f(1)=0$.
Different choices of $f$ recover popular divergences as special cases
in~(\ref{eqn:f-divergence}).
We illustrate common choices in Table~\ref{tab:f-divergences}. See
supplementary material for more divergences and plots.

\subsection{Variational Estimation of $f$-divergences}
Nguyen et al.~\cite{nguyen2010divergenceestimation} derive a general
variational method to estimate $f$-divergences given only samples from
$P$ and $Q$.
We will extend their method from merely estimating a divergence for a 
fixed model to  estimating model parameters.
We call this new method \emph{variational divergence minimization} (VDM) and
show that the generative-adversarial training is a special case of this more
general VDM framework.

For completeness, we first provide a self-contained derivation of Nguyen et
al's
divergence estimation procedure.
%
Every convex, lower-semicontinuous function $f$ has a \emph{convex conjugate}
function $f^*$, also known as \emph{Fenchel
conjugate}~\cite{hiriarturruty2012convexanalysis}.
This function is defined as
\begin{equation}
f^*(t) = \sup_{u \in \textrm{dom}_f} \left\{ u t - f(u) \right\}.
\label{eqn:fstar}
\end{equation}
The function $f^*$ is again convex and lower-semicontinuous and the pair
$(f,f^*)$ is dual to another in the sense that $f^{**} = f$.  Therefore, we can
also represent $f$ as
$f(u) = \sup_{t \in \textrm{dom}_{f^*}} \left\{ t u - f^*(t) \right\}$.
Nguyen et al. leverage the above variational representation of $f$ in the definition
of the $f$-divergence to obtain a lower bound on the divergence,
\begin{align}
D_f(P \| Q)  &=
 \textstyle\bigintsss_{\mathcal{X}} q(x)
	\sup\limits_{t \in \textrm{dom}_{f^*}}
	\left\{t \frac{p(x)}{q(x)} - f^*(t)\right\}
	\,\textrm{d}x\notag\\
& \geq \textstyle\sup_{T \in \mathcal{T}}\left(
	\bigintsss_{\mathcal{X}} p(x) \, T(x)\,\textrm{d}x
- \textstyle\bigintsss_{\mathcal{X}}q(x) \, f^*(T(x)) \,\textrm{d}x\right)\notag\\
 &=  \sup_{T \in \mathcal{T}}
	\left(
		\mathbb{E}_{x \sim P}\left[T(x)\right]
		- \mathbb{E}_{x \sim Q}\left[f^*(T(x))\right]
	\right),
	\label{eqn:f-variational}
\end{align}
where $\mathcal{T}$ is an arbitrary class of functions $T: \mathcal{X} \to
\mathbb{R}$.
The above derivation yields a lower bound for two reasons: \emph{first},
because of Jensen's inequality when swapping the integration and supremum
operations.
\emph{Second}, the class of functions $\mathcal{T}$ may contain only a subset
of all possible functions. 

By taking the variation of the lower bound in~\eqref{eqn:f-variational} w.r.t.
$T$, we find that under mild conditions on
$f$~\cite{nguyen2010divergenceestimation}, the bound is tight for
\begin{equation}
	T^{\ast}(x) = f^{\prime}\left( \frac{p(x)}{q(x)} \right),
	\label{eqn:tstar}
\end{equation}
where $f^{\prime}$ denotes the first order derivative of $f$. This condition
can serve as a guiding principle for choosing $f$ and designing the class of 
functions $\mathcal{T}$. For example, the popular reverse Kullback-Leibler
divergence corresponds to $f(u) = -\log(u)$ resulting in
$T^{\ast}(x) = -q(x)/p(x)$, see Table~\ref{tab:f-divergences}.

We list common $f$-divergences in Table~\ref{tab:f-divergences}
and provide their Fenchel conjugates $f^*$ and the domains
$\textrm{dom}_{f^*}$ in Table~\ref{tab:f-divergence-act}.
We provide plots of the generator functions and their conjugates in the
supplementary materials.


\begin{table}[tb]
\begin{center}
\scalebox{0.75}{%
\begin{tabular}{llll}
\toprule
Name & $D_f(P\|Q)$ & Generator $f(u)$ 
& $T^{\ast}(x)$\\ \midrule
Kullback-Leibler
& $\int p(x) \log \frac{p(x)}{q(x)} \,\textrm{d}x$
& $u \log u$
& $1+\log \frac{p(x)}{q(x)}$\\
Reverse KL
& $\int q(x) \log \frac{q(x)}{p(x)}\,\textrm{d}x$
& $-\log u$
& $-\frac{q(x)}{p(x)}$\\
Pearson $\chi^2$
& $\int \frac{(q(x)-p(x))^2}{p(x)}\,\textrm{d}x$
& $(u-1)^2$
& $2(\frac{p(x)}{q(x)}-1)$\\
Squared Hellinger
& $\int\left(\sqrt{p(x)} - \sqrt{q(x)}\right)^2 \,\textrm{d}x$
& $\left(\sqrt{u}-1\right)^2$
& $(\sqrt{\frac{p(x)}{q(x)}}-1)\cdot\sqrt{\frac{q(x)}{p(x)}}$\\
Jensen-Shannon
& $\frac{1}{2} \int p(x) \log \frac{2 p(x)}{p(x)+q(x)}
	+ q(x) \log \frac{2 q(x)}{p(x) + q(x)}\,\textrm{d}x$
& $-(u+1) \log \frac{1+u}{2} + u \log u$
& $\log\frac{2p(x)}{p(x)+q(x)}$\\

\\

GAN
& $\int p(x) \log \frac{2 p(x)}{p(x)+q(x)}
	+ q(x) \log \frac{2 q(x)}{p(x) + q(x)}\,\textrm{d}x - \log(4)$
& $u \log u - (u+1) \log(u+1)$
& $\log\frac{p(x)}{p(x)+q(x)}$\\

\\
\bottomrule
\end{tabular}
}%
\end{center}
\caption{\small List of $f$-divergences $D_f(P\|Q)$ together with generator
functions.
Part of the list of divergences and their generators is based
on~\cite{nielsen2014fdivergences}.
For all divergences we have $f: \textrm{dom}_f \to \mathbb{R} \cup
\{+\infty\}$, where $f$ is convex and lower-semicontinuous.  Also we have
$f(1)=0$ which ensures that $D_f(P\|P)=0$ for any distribution $P$.
As shown by~\cite{goodfellow2014generativeadversarial}
GAN is related to the Jensen-Shannon divergence through
 $D_{\textrm{GAN}} = 2 D_{\textrm{JS}} - \log(4)$.
}
\label{tab:f-divergences}
\end{table}

\subsection{Variational Divergence Minimization (VDM)}
We now use the variational lower bound~(\ref{eqn:f-variational}) on the
$f$-divergence $D_f(P\|Q)$ in order to estimate a generative model $Q$ given
a true distribution $P$.

To this end, we follow the generative-adversarial
approach~\cite{goodfellow2014generativeadversarial} and use two neural
networks, $Q$ and $T$.
$Q$ is our generative model, taking as input a random vector and outputting a
sample of interest.  We parametrize $Q$ through a vector $\theta$ and write
$Q_{\theta}$.
$T$ is our variational function, taking as input a sample and returning a
scalar.  We parametrize $T$ using a vector $\omega$ and write $T_{\omega}$.

We can learn a generative model $Q_{\theta}$ by finding a saddle-point of the
following $f$-GAN objective function, where we minimize with respect to
$\theta$ and maximize with respect to $\omega$,
\begin{equation}
F(\theta,\omega) =
	\mathbb{E}_{x \sim P}\left[T_{\omega}(x)\right]
	- \mathbb{E}_{x \sim Q_{\theta}}\left[f^*(T_{\omega}(x))\right].
\label{eqn:F}
\end{equation}
To optimize~(\ref{eqn:F}) on a given finite training data set, we approximate
the expectations using minibatch samples.
To approximate $\mathbb{E}_{x \sim P}[\cdot]$ we sample $B$ instances without
replacement from the training set.
To approximate $\mathbb{E}_{x \sim Q_{\theta}}[\cdot]$ we sample $B$
instances from the current generative model $Q_{\theta}$.

\subsection{Representation for the Variational Function}
To apply the variational objective~(\ref{eqn:F}) for different
$f$-divergences, we need to respect the domain $\textrm{dom}_{f^*}$ of the conjugate
functions $f^*$.
To this end, we assume that variational function $T_{\omega}$ is represented
in the form $T_{\omega}(x) = g_f(\varf(x))$ and rewrite the
saddle objective \eqref{eqn:F} as follows:
\begin{equation}
 \label{eqn:Fv}
F(\theta,\omega)=
	\mathbb{E}_{x \sim P}\left[g_f(\varf(x))\right]
	 + \mathbb{E}_{x \sim Q_{\theta}}\left[-f^*(g_f(\varf(x)))\right],
\end{equation}
where $\varf: \mathcal{X} \to \mathbb{R}$ without any range constraints
on the output, and $g_f: \mathbb{R} \to \textrm{dom}_{f^*}$ is an \emph{output
activation function} specific to the $f$-divergence used.
In Table~\ref{tab:f-divergence-act} we propose suitable output activation
functions for the various conjugate functions $f^*$ and their
domains.\footnote{Note that for numerical implementation we recommend directly
implementing the scalar function $f^*(g_f(\cdot))$ robustly instead of
evaluating the two functions in sequence; see Figure
\ref{fig:objectives}.} Although the choice of $g_f$ is somewhat
arbitrary, we choose all of them to be monotone increasing functions so
that a large output $V_\omega(x)$ corresponds to the belief of the
variational function that the sample $x$ comes from the data
distribution $P$ as in the GAN case; see Figure~\ref{fig:objectives}.
It is also instructive to look at the second term $-f^\ast(g_f(v))$ in
the saddle objective \eqref{eqn:Fv}. This term is typically (except for
the Pearson $\chi^2$ divergence) a decreasing
function of the output $V_{\omega}(x)$ favoring variational functions
that output negative numbers for samples from the generator.

\begin{table}[tb]
\begin{center}
\scalebox{0.85}{%
\begin{tabular}{lllll}
\toprule
Name & Output activation $g_f$ & $\textrm{dom}_{f^*}$ & Conjugate $f^*(t)$ & $f'(1)$
\\ \midrule
Kullback-Leibler (KL)
& $v$
& $\mathbb{R}$
& $\exp(t-1)$
& $1$
\\
Reverse KL
& $-\exp(-v)$
& $\mathbb{R}_-$
& $-1-\log(-t)$
& $-1$
\\
Pearson $\chi^2$
& $v$
& $\mathbb{R}$
& $\frac{1}{4} t^2 + t$
& $0$
\\
Squared Hellinger
& $1 - \exp(-v)$
& $t < 1$
& $\frac{t}{1-t}$
& $0$
\\
Jensen-Shannon
& $\log(2) - \log(1 + \exp(-v))$
& $t < \log(2)$
& $- \log(2-\exp(t))$
& $0$
\\

GAN
& $-\log(1 + \exp(-v))$
& $\mathbb{R}_-$
& $- \log(1-\exp(t))$
& $-\log(2)$
\\
\bottomrule
\end{tabular}
}%
\end{center}
\caption{\small Recommended final layer activation functions and critical variational
function level defined by $f'(1)$.
%
%
%
The critical value $f'(1)$ can be interpreted as a classification threshold
applied to $T(x)$ to distinguish between true and generated samples.
}
\label{tab:f-divergence-act}
\end{table}

We can see the GAN objective,
\begin{align}
\label{eqn:gan}
F(\theta,\omega) =
	\mathbb{E}_{x \sim P}\left[\log D_\omega(x)\right]
	+ \mathbb{E}_{x \sim Q_{\theta}}\left[\log(1-D_{\omega}(x))\right],
\end{align}
as a special instance of \eqref{eqn:Fv} by identifying
each terms in the expectations of \eqref{eqn:Fv} and \eqref{eqn:gan}.
In particular, choosing the last nonlinearity in the discriminator
as the sigmoid
$D_{\omega}(x)=1/(1+e^{-\varf(x)})$, corresponds to output
activation function is $g_f(v)=-\log(1+e^{-v})$; see Table~\ref{tab:f-divergence-act}.

\begin{figure}[tb]
 \begin{center}
  \includegraphics[width=.8\textwidth]{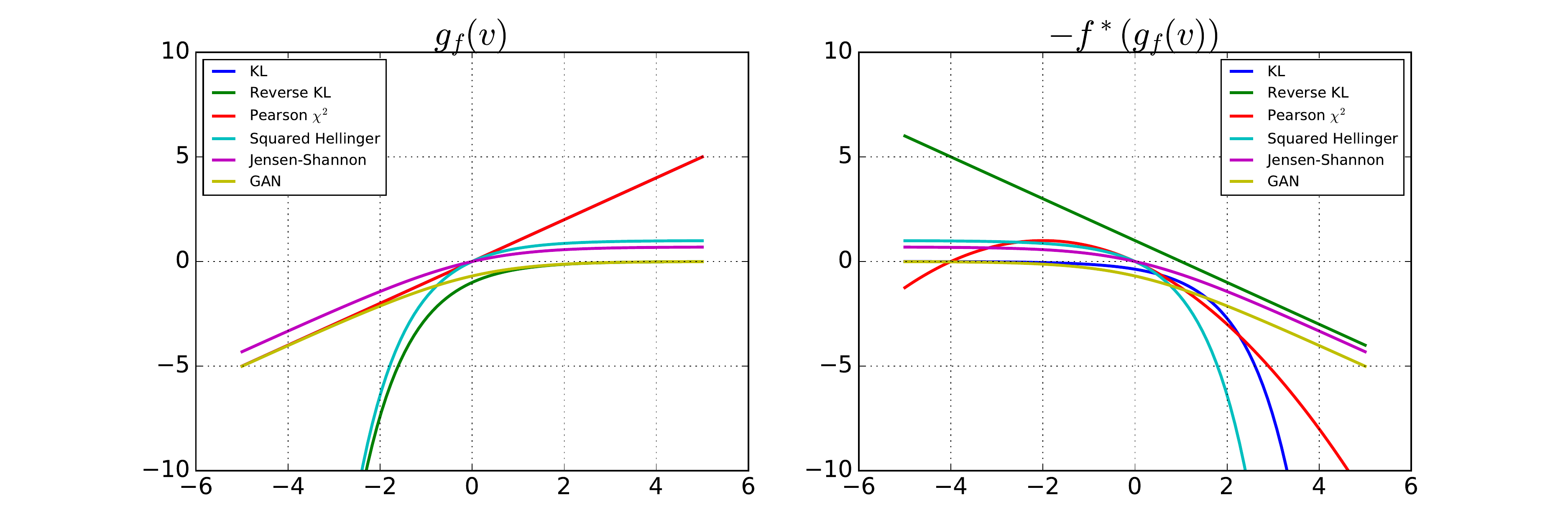}
 \caption{\small The two terms in the saddle objective \eqref{eqn:Fv}
  are plotted as a function of the variational function $\varf(x)$.}
  \label{fig:objectives}
 \end{center}
\end{figure}

\subsection{Example: Univariate  Mixture of Gaussians}
\label{sec:gmm}


\begin{table}[t]
    \begin{center}
        {\tiny
            \begin{tabular}{lccccc}
                \toprule
                                & KL        & KL-rev     & JS    & Jeffrey  & Pearson
                \\
                \midrule                
                $D_{f} (P \rvert\lvert Q_{{\theta}^{\ast}})$    &  0.2831    & 0.2480    & 0.1280    & 0.5705    & 0.6457
                \\
                $F(\hat{\omega}, \hat{\theta})$    & 0.2801    & 0.2415    & 0.1226    & 0.5151    & 0.6379
                \\   
                \midrule  
                $\mu^{\ast}$       & 1.0100    & 1.5782    & 1.3070    & 1.3218    & 0.5737
                \\
                $\hat{\mu}$     & 1.0335    & 1.5624    & 1.2854    & 1.2295   & 0.6157  
                \\    
                \midrule
                $\sigma^{\ast}$   & 1.8308    & 1.6319    & 1.7542    & 1.7034    & 1.9274
                \\
                $\hat{\sigma}$  & 1.8236    & 1.6403    & 1.7659    & 1.8087    & 1.9031
                \\
                \bottomrule
            \end{tabular}
            }
            \hspace{0.02\textwidth}
        {\tiny
            \begin{tabular}{lccccc}
                \toprule
                 train $\backslash$ test  & KL        & KL-rev     & JS    & Jeffrey  & Pearson
                \\
                \midrule                
                KL          & {\bf 0.2808}    & 0.3423    & 0.1314    & 0.5447    & 0.7345
                \\
                KL-rev      & 0.3518    & {\bf 0.2414}    & 0.1228    & 0.5794    & 1.3974
                \\
                JS          & 0.2871    & 0.2760    & {\bf 0.1210}    & 0.5260    & 0.92160
                \\
                Jeffrey    &  0.2869    & 0.2975    & 0.1247    & {\bf 0.5236}    & 0.8849
                \\
                Pearson     &  0.2970    & 0.5466    & 0.1665    & 0.7085    & {\bf 0.648}
                \\
                \bottomrule
            \end{tabular}
        }
    \end{center}
\caption{\small Gaussian approximation of a mixture of Gaussians.
Left: optimal objectives, and the learned mean and the standard deviation:
$\hat{\theta} = (\hat{\mu}, \hat{\sigma})$ (learned) and $\theta^{\ast} = (\mu^{\ast}, \sigma^{\ast})$ (best fit).
Right: objective values to the true distribution for each trained model.
For each divergence, the lowest objective function value is achieved by the model that was
trained for this divergence.}
    \label{TableGMM}
\end{table}

To demonstrate the properties of the different $f$-divergences and
to validate the variational divergence estimation framework we perform an
experiment similar to the one of~\cite{minka2005divergence}.

\textbf{Setup.}
We approximate a mixture of Gaussians by learning a Gaussian distribution.
We represent our model $Q_{\theta}$ using a linear
function which receives a random $z \sim \mathcal{N}(0,1)$ and outputs
$G_{\theta}(z) = \mu + \sigma z$,
where $\theta = (\mu, \sigma)$ are the two scalar parameters to be learned.
For the variational function $T_{\omega}$ we use a neural network with two
hidden layers having $64$ units each and $\textrm{tanh}$ activations.
%
We optimise the objective $F(\omega, \theta)$ by using the single-step
gradient method presented in Section~\ref{SecAlgorithms}.
In each step we sample batches of size $1024$ each for both $p(x)$ and $p(z)$
and we use a step-size of $\eta = 0.01$ for updating both $\omega$ and $\theta$.
We compare the results to the best fit provided by the exact optimization
of $D_f(P \| Q_{\theta})$ w.r.t. $\theta$, which is feasible in this case by
solving the required integrals in~(\ref{eqn:f-divergence}) numerically. 
We use $(\hat{\omega}, \hat{\theta})$ (learned) and $\theta^{\ast}$ (best fit) to distinguish the parameters sets used in these
two approaches.  

\textbf{Results.}
The left side of Table~\ref{TableGMM} shows the optimal divergence and
objective values $D_{f} (P \rvert\lvert Q_{{\theta}^{\ast}})$ and
$F(\hat{\omega}, \hat{\theta})$  as well as the resulting means and standard
deviations.
Note that the results are in line with the lower bound property, that is, we
have $D_{f} (P \rvert\lvert Q_{{\theta}^{\ast}}) \geq F(\hat{\omega}, \hat{\theta})$.
There is a good correspondence between the gap in objectives and the difference
between the fitted means and standard deviations. 
The right side of Table~\ref{TableGMM} shows the results of the following
experiment: (1) we train $T_{\omega}$ and $Q_{\theta}$ using a particular
divergence, then (2) we estimate the divergence and re-train $T_{\omega}$ while
keeping $Q_{\theta}$ fixed.
As expected, $Q_{\theta}$ performs best on the divergence it was trained with.
Further details showing detailed plots of the fitted Gaussians and the optimal
variational functions are presented in the supplementary materials.

In summary, the above results demonstrate that when the generative model is
misspecified and does not contain the true distribution, the divergence
function used for estimation has a strong influence on which model is learned.

\section{Algorithms for Variational Divergence Minimization (VDM)}\label{SecAlgorithms}
We now discuss numerical methods to find saddle points of the
objective \eqref{eqn:F}.
To this end, we distinguish two methods; first, the alternating method
originally proposed by Goodfellow et
al.~\cite{goodfellow2014generativeadversarial}, and second, a more direct
single-step optimization procedure.

In our variational framework,
the alternating gradient method can be described as a double-loop method;
the internal loop tightens the lower
bound on the divergence, whereas the outer loop improves the generator model.
While the motivation for this method is plausible, in practice the choice
taking a single step in the inner loop is popular.  Goodfellow et
al.~\cite{goodfellow2014generativeadversarial} provide a local convergence
guarantee.

\subsection{Single-Step Gradient Method}
\label{sec:SSG}

Motivated by  the success of the alternating gradient method with a single inner step,
we  propose a simpler algorithm shown in
Algorithm~\ref{alg:method2}. The algorithm differs from the original
one in that there is no inner loop and the gradients with respect to $\omega$ and
$\theta$ are computed in a single back-propagation.

\begin{algorithm}
\caption{\small Single-Step Gradient Method}
	\label{alg:method2}
\scalebox{0.95}{%
{\small
\begin{minipage}[h!]{0.999\linewidth}
\begin{algorithmic}[1]
\Function{SingleStepGradientIteration}{$P,\theta^t,\omega^t,B,\eta$}
\State Sample $X_P=\{x_1, \ldots, x_B\}$ and $X_Q=\{x'_1,
 \ldots, x'_B\}$, from $P$ and $Q_{\theta^t}$, respectively.
\State Update: $\omega^{t+1}= \omega^{t} + \eta \, \nabla_{\omega} F(\theta^t,\omega^t)$.
\State Update: $\theta^{t+1}=\theta^{t} - \eta \, \nabla_{\theta} F(\theta^t,\omega^t)$.
\label{alg:generator-update}
\EndFunction
\end{algorithmic}
\end{minipage}%
}
}%
\end{algorithm}

\paragraph{Analysis.}
Here we show that Algorithm~\ref{alg:method2} geometrically
converges to a saddle point $(\theta^{\ast}, \omega^{\ast})$ if there is a
neighborhood around the saddle point in which $F$ is strongly convex in
$\theta$ and strongly concave in~$\omega$. These conditions are similar to the
assumptions made in~\cite{goodfellow2014generativeadversarial} and can be
formalized as follows:
  \begin{align}
   \label{eq:saddle-conditions}
  \nabla_{\theta} F(\theta^{\ast},\omega^{\ast}) = 0,\quad
  \nabla_{\omega} F(\theta^{\ast}, \omega^{\ast}) = 0,\quad
\nabla_\theta^2 F(\theta,\omega)\succeq \delta
  I,\quad \nabla_\omega^2 F(\theta,\omega)\preceq -\delta I.
  \end{align}
These assumptions are necessary except for the ``strong'' part in order to
define the type of saddle points that are
valid solutions of our variational framework. Note that although there could be many saddle points
that arise from the structure of deep networks
\cite{DauPasGulChoGanBen14}, they do not qualify as the solution of our
variational framework under these assumptions.

For convenience, let's define $\pi^{t}=(
 \theta^{t}, \omega^{t})$.
Now the convergence of Algorithm \ref{alg:method2} can be stated as
follows (the proof is given in the supplementary material):
\begin{theorem}
\label{thm:conv}
Suppose that there is a saddle point $\pi^{\ast}=(\theta^{\ast},\omega^{\ast})$
with a neighborhood that satisfies conditions
 \eqref{eq:saddle-conditions}.
Moreover, we define $J(\pi) = \frac{1}{2}\|\nabla F(\pi)\|_2^2$ and assume 
that in the above neighborhood,
 $F$ is sufficiently smooth so that there is a constant $L>0$ such that
 $\|\nabla J(\pi')-\nabla J(\pi)\|_2\leq L\|\pi'-\pi\|_2$
 for any $\pi,\pi'$ in the neighborhood of $\pi^{\ast}$.
 Then using the step-size $\eta=\delta/L$ in Algorithm \ref{alg:method2}, we have
\begin{align*}
 J(\pi^{t}) \leq \left(1-\frac{\delta^2}{2L}\right)^{t} J(\pi^{0})
\end{align*}
 That is, the squared norm
 of the gradient $\nabla F(\pi)$ decreases geometrically.
\end{theorem}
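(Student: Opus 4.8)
The plan is to treat $J(\pi)=\tfrac12\|\nabla F(\pi)\|_2^2$ as a Lyapunov (merit) function and show it contracts by a fixed factor at each iteration. Since $J\geq 0$ and $J(\pi)=0$ exactly at stationary points of $F$, geometric decay of $J$ is precisely the convergence statement. First I would rewrite one step of Algorithm~\ref{alg:method2} as the fixed-step iteration $\pi^{t+1}=\pi^t+\eta\,v(\pi^t)$, where the update vector field is $v(\pi)=(-\nabla_\theta F(\pi),\,\nabla_\omega F(\pi))$. The essential point is that $v$ is \emph{not} the negative gradient of $J$: the work of the proof is to show that it is nonetheless a descent direction for $J$, and this is exactly where the sign structure of the saddle-point conditions \eqref{eq:saddle-conditions} enters.

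The central computation is the inner product $\langle\nabla J(\pi),v(\pi)\rangle$. Writing $g=\nabla F=(g_\theta,g_\omega)$ and $H=\nabla^2 F$ (symmetric, since $F$ is $C^2$), one has $\nabla J=Hg$ and $v=Sg$ with $S=\mathrm{diag}(-I,I)$, so that $\langle\nabla J,v\rangle=g^\top H S g$. Expanding into the blocks of $H$, the two cross terms $g_\theta^\top H_{\theta\omega}g_\omega$ and $g_\omega^\top H_{\omega\theta}g_\theta$ cancel by symmetry of the Hessian ($H_{\theta\omega}=H_{\omega\theta}^\top$), leaving $\langle\nabla J,v\rangle=-g_\theta^\top H_{\theta\theta}g_\theta+g_\omega^\top H_{\omega\omega}g_\omega$. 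The definiteness hypotheses $\nabla_\theta^2 F\succeq\delta I$ and $\nabla_\omega^2 F\preceq-\delta I$ then give $\langle\nabla J,v\rangle\leq-\delta(\|g_\theta\|_2^2+\|g_\omega\|_2^2)=-\delta\|\nabla F\|_2^2$.

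With this in hand I would invoke the $L$-smoothness hypothesis on $\nabla J$, which yields the descent inequality $J(\pi^{t+1})\leq J(\pi^t)+\eta\,\langle\nabla J(\pi^t),v(\pi^t)\rangle+\tfrac{L\eta^2}{2}\|v(\pi^t)\|_2^2$. Since $\|v\|_2^2=\|\nabla F\|_2^2=2J(\pi^t)$, substituting the bound above turns this into $J(\pi^{t+1})\leq(1-2\delta\eta+L\eta^2)\,J(\pi^t)$, and the choice $\eta=\delta/L$ makes the bracket a constant strictly below one; unrolling the recursion over $t$ steps produces the claimed geometric rate.

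The main obstacle is not the descent algebra but guaranteeing its hypotheses are available at every iterate: the definiteness bounds and the constant $L$ are assumed only on a neighborhood of $\pi^\ast$, so the argument is really an induction in which I must check that $\pi^{t+1}$ stays in that neighborhood whenever $\pi^t$ does. For this I would note that at a nondegenerate saddle the Hessian $H(\pi^\ast)$, of the form $\bigl(\begin{smallmatrix}A & B\\ B^\top & -C\end{smallmatrix}\bigr)$ with $A,C\succ0$, is invertible (its Schur complement $-C-B^\top A^{-1}B$ is negative definite), giving a local lower bound $\|\nabla F(\pi)\|_2\geq c\,\|\pi-\pi^\ast\|_2$; hence the monotone decrease of $J$ confines the iterates to a sublevel set contained in the neighborhood, closing the induction. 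A secondary point is to confirm that the cross-term cancellation needs only $C^2$ regularity, which it does, by symmetry of second derivatives.
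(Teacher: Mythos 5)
Your proposal is correct and follows essentially the same route as the paper's proof: the key step is identical---write $\nabla J = \nabla^2 F \,\nabla F$, expand $\langle \tilde{\nabla} F, \nabla J\rangle$ into blocks so the cross terms cancel by symmetry of the Hessian, bound it by $-\delta\|\nabla F\|_2^2$ using the definiteness conditions, and then combine with the smoothness (descent) inequality and the step size $\eta=\delta/L$. Two minor remarks: by tracking the factor of two in $J=\tfrac{1}{2}\|\nabla F\|_2^2$ consistently you actually land on the slightly stronger contraction factor $1-\delta^2/L$ (the paper's final algebra uses $J=\|\nabla F\|_2^2$, which is how it arrives at $1-\delta^2/(2L)$; your bound implies the stated one since $J\geq 0$), and your induction keeping the iterates inside the neighborhood makes explicit a point the paper's proof leaves implicit.
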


\subsection{Practical Considerations}
Here we discuss principled extensions of the heuristic proposed in 
\cite{goodfellow2014generativeadversarial} and real/fake statistics
discussed by Larsen and
S{\o}nderby\footnote{http://torch.ch/blog/2015/11/13/gan.html}. Furthermore
we discuss practical advice that 
slightly deviate from the principled viewpoint.

Goodfellow et al. \cite{goodfellow2014generativeadversarial} noticed
that training GAN can be significantly sped up by maximizing
$\mathbb{E}_{x\sim Q_\theta}\left[\log D_{\omega}(x)\right]$ instead of
minimizing $\mathbb{E}_{x\sim
Q_\theta}\left[\log\left(1-D_{\omega}(x)\right)\right]$ for updating the
generator.
In the more general $f$-GAN Algorithm~\eqref{alg:method2} this means that we
replace line~\ref{alg:generator-update} with the update
\begin{equation}
\theta^{t+1} = \theta^t + \eta \,
	\nabla_{\theta} \mathbb{E}_{x \sim Q_{\theta^t}}[g_f(V_{\omega^t}(x))],
\end{equation}
thereby maximizing the generator output.
This is not only intuitively correct but we can show that the stationary point
is preserved by this change using the same argument as
in~\cite{goodfellow2014generativeadversarial}; we found this useful also for
other divergences.

Larsen and S{\o}nderby recommended monitoring \emph{real} and \emph{fake} statistics,
which are defined as the true positive and true negative rates of the
variational function viewing it as a binary classifier. Since our output
activation $g_f$ are all monotone, we can derive similar statistics for
any $f$-divergence by only changing the decision threshold. Due to the
link between the density ratio and the variational function
\eqref{eqn:tstar}, the threshold lies at $f'(1)$ (see Table
\ref{tab:f-divergence-act}). That is, we can interpret the output of
the variational function as classifying the input $x$ as a true
sample if the variational function
$T_\omega(x)$ is larger than $f'(1)$, and classifying it as a
sample from the generator otherwise.


We found Adam \cite{kingma2014adam} and gradient clipping to be useful
especially in the large scale experiment on the LSUN dataset.


\section{Experiments}
\label{sec:experiments}

We now train generative neural samplers based on VDM on the MNIST and
LSUN datasets.

\paragraph{MNIST Digits.}
We use the MNIST training data set (60,000 samples, 28-by-28 pixel images) to
train the generator and variational function model proposed
in~\cite{goodfellow2014generativeadversarial} for various $f$-divergences.
With $z \sim \textrm{Uniform}_{100}(-1,1)$ as input, the generator model
has two linear layers each followed by batch normalization and ReLU
activation and a final linear layer followed by the sigmoid function.
The variational function $\varf(x)$ has three linear layers with
exponential linear unit~\cite{clevert2015elu} in between. The final
activation is specific to each divergence and listed in Table \ref{tab:f-divergence-act}.
As in~\cite{radford2015dcgan} we use Adam with a learning rate of
$\alpha=0.0002$ and update weight $\beta=0.5$.  We use a batchsize of 4096,
sampled from the training set without replacement, and train each model for
one hour.
We also compare against variational autoencoders~\cite{kingma2013vae}
with 20 latent dimensions.

{\em Results and Discussion.}
We evaluate the performance using the kernel density estimation (Parzen
window) approach used in~\cite{goodfellow2014generativeadversarial}.
To this end, we sample 16k images from the model and estimate a Parzen window
estimator using an isotropic Gaussian kernel bandwidth using three fold cross
validation.  The final density model is used to evaluate the average
log-likelihood on the MNIST test set (10k samples).
We show the results in Table~\ref{tab:mnist-kde}, and some samples from our
models in Figure~\ref{fig:mnist-samples}.

The use of the KDE approach to log-likelihood estimation has known
deficiencies~\cite{theis2015generativemodelevaluation}.
In particular, for the dimensionality used in MNIST ($d=784$) the number of
model samples required to obtain accurate log-likelihood estimates is
infeasibly large.
We found a large variability (up to 50 nats) between multiple repetitions.
As such the results are not entirely conclusive.
We also trained the same KDE estimator on the MNIST training set, achieving a
significantly higher holdout likelihood.
However, it is reassuring to see that the model trained for the
Kullback-Leibler divergence indeed achieves a high holdout likelihood compared
to the GAN model.

%
\begin{minipage}[t]{\textwidth}
\begin{minipage}[b]{0.729\textwidth}
\centering
{\small%
\begin{tabular}{lrr}
\toprule
Training divergence & KDE $\langle LL \rangle$ (nats) & $\pm$ SEM
\\ \midrule
Kullback-Leibler
& 416 & 5.62		
\\
 Reverse Kullback-Leibler
& 319 & 8.36		
\\
Pearson $\chi^2$
& 429 & 5.53		
\\
Neyman $\chi^2$
& 300 & 8.33		
\\
Squared Hellinger
& -708 & 18.1      
\\
Jeffrey
& -2101 & 29.9     
\\
Jensen-Shannon
& 367 & 8.19		
\\
GAN
& 305 & 8.97       
\\ \midrule
Variational Autoencoder~\cite{kingma2013vae}
& 445 & 5.36		
\\
KDE MNIST train (60k)
& 502 & 5.99
\\
\bottomrule
\end{tabular}
}%
\captionof{table}{\small Kernel Density Estimation evaluation on the MNIST test data set.
Each KDE model is build from 16,384 samples from the learned generative model.
We report the mean log-likelihood on the MNIST test set ($n=10,000$) and the
standard error of the mean.
The KDE MNIST result is using 60,000 MNIST training images to fit a single KDE
model.}
\label{tab:mnist-kde}
\end{minipage}%
\hfill%
\begin{minipage}[b]{0.249\textwidth}
\centering
\includegraphics[width=0.82\linewidth]{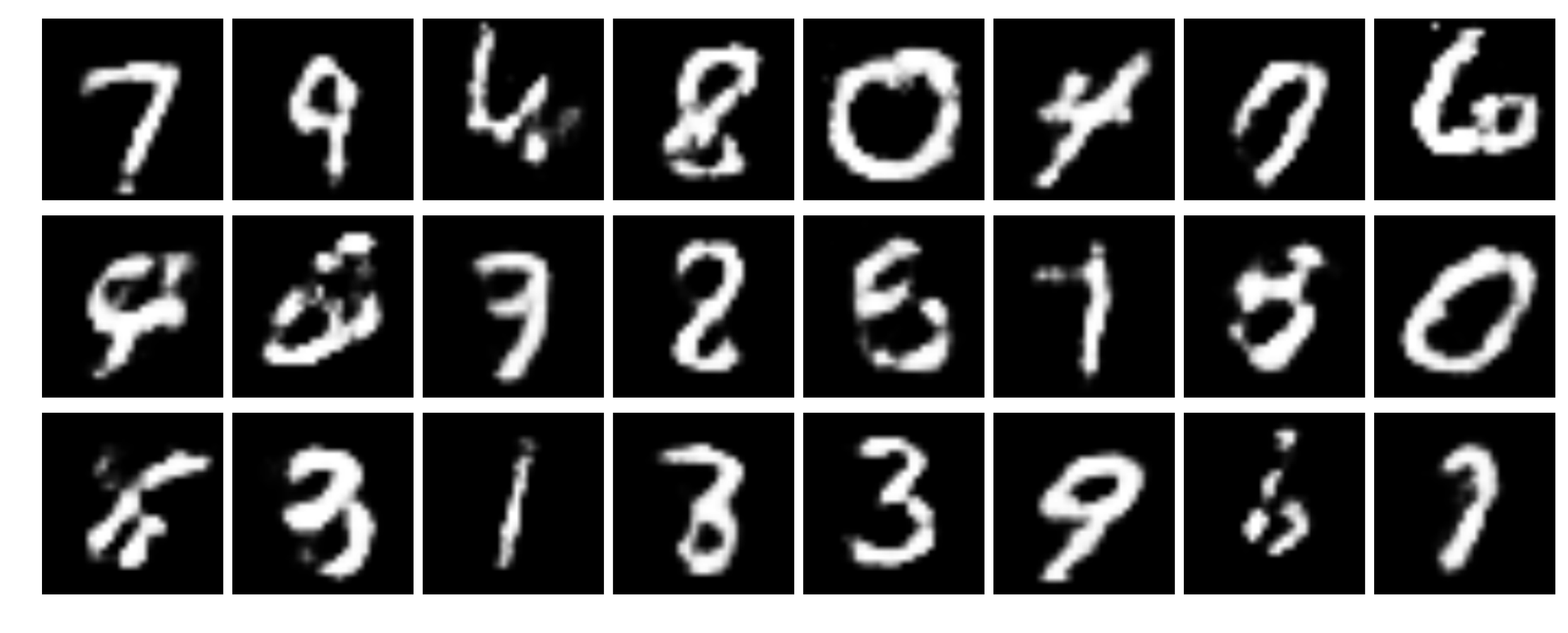}\\%
\includegraphics[width=0.82\linewidth]{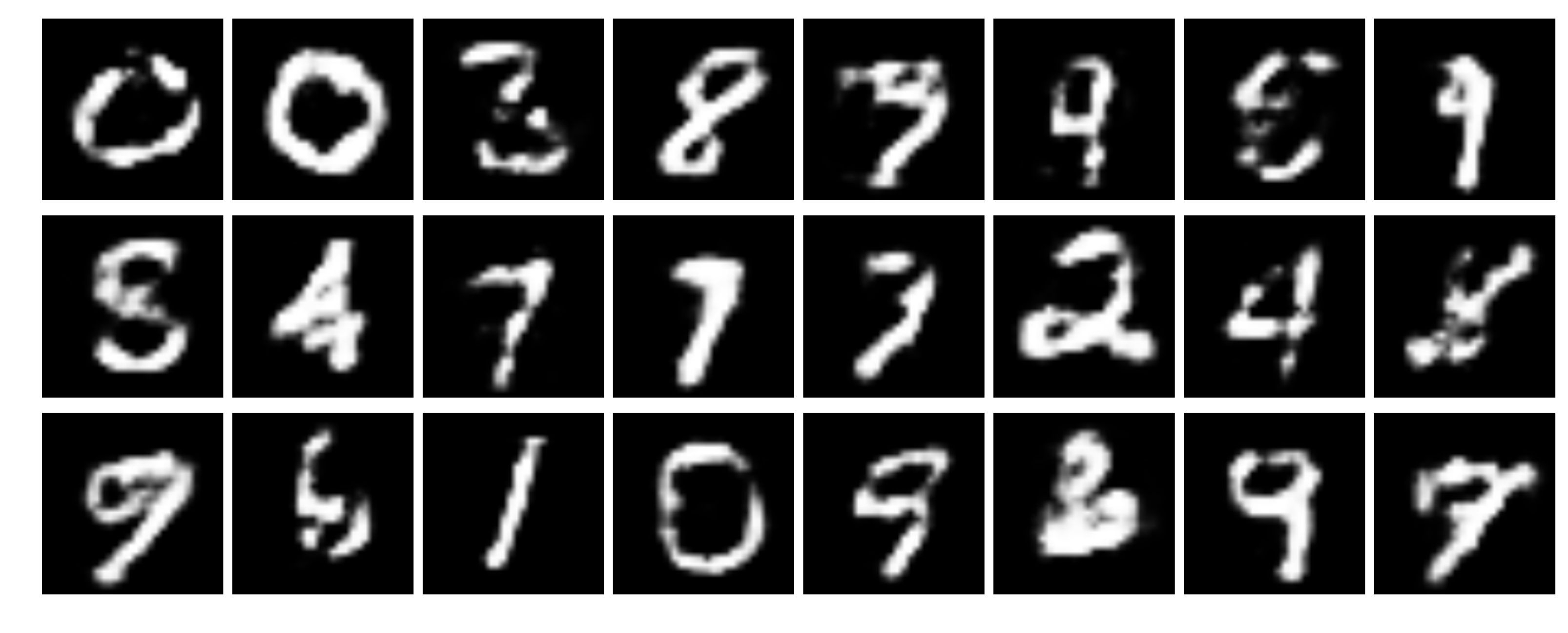}\\%
\includegraphics[width=0.82\linewidth]{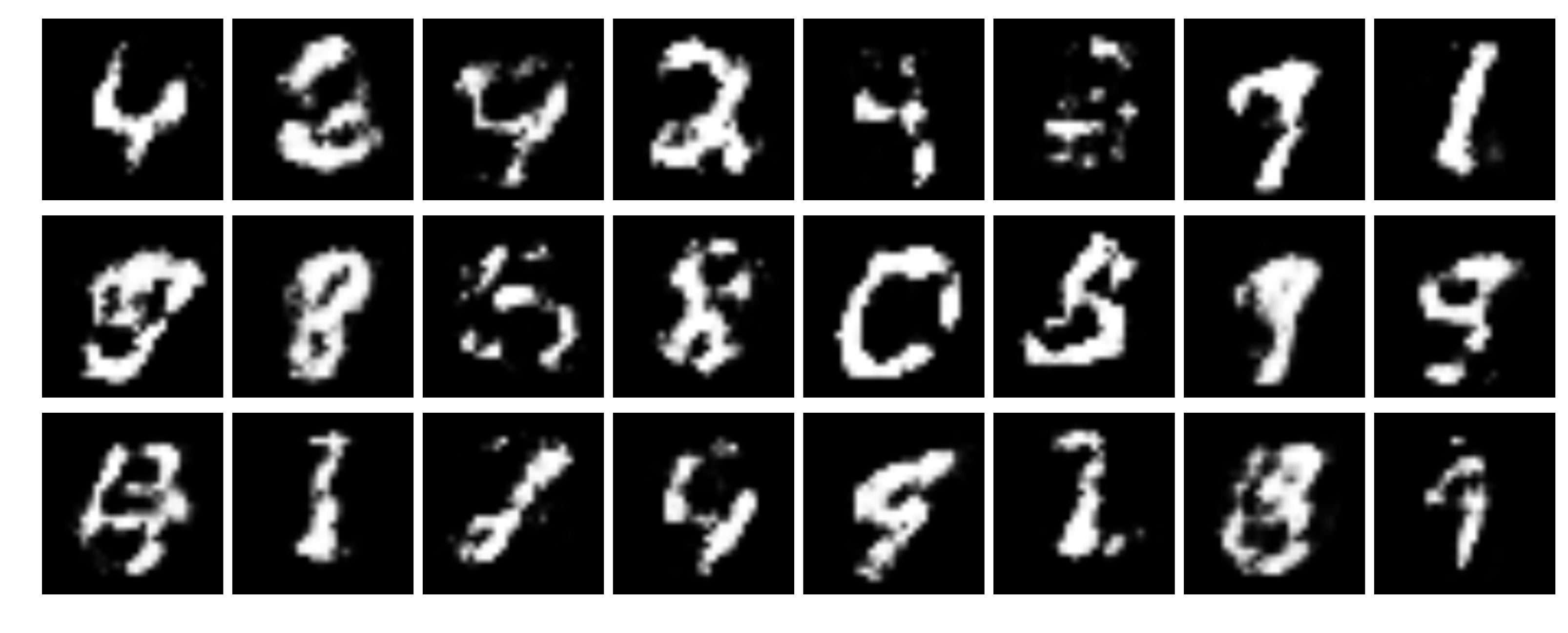}\\%
\includegraphics[width=0.82\linewidth]{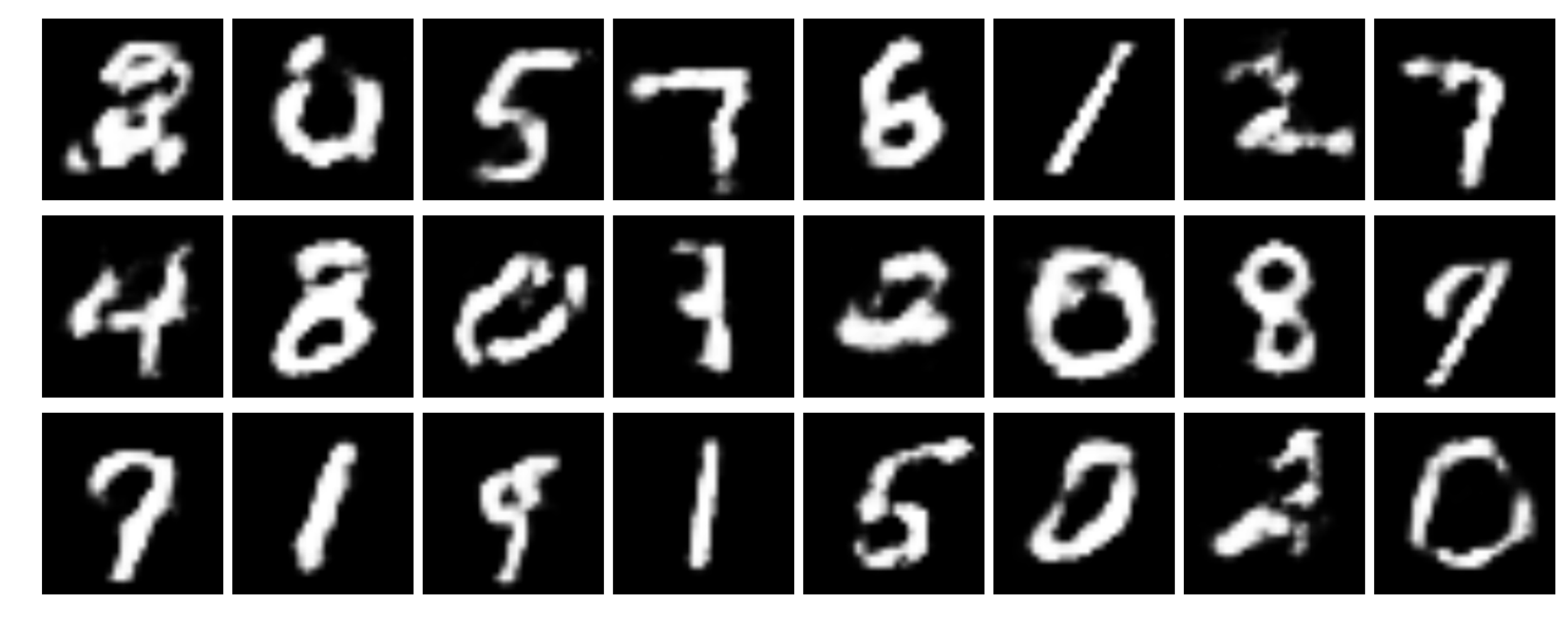}%
\captionof{figure}{\small MNIST model samples trained using
KL, reverse KL, Hellinger, Jensen from top to bottom.}
\label{fig:mnist-samples}%
\end{minipage}
\end{minipage}

\paragraph{LSUN Natural Images.}
Through the DCGAN work~\cite{radford2015dcgan} the generative-adversarial
approach has shown real promise in generating natural looking images.
Here we use the same architecture as as in~\cite{radford2015dcgan} and replace
the GAN objective with our more general $f$-GAN objective.

We use the large scale LSUN database~\cite{yu15lsun} of natural images of
different categories.
To illustrate the different behaviors of different divergences we train the
same model on the \emph{classroom} category of images, containing 168,103
images of classroom environments, rescaled and center-cropped to 96-by-96
pixels.

{\em Setup.}
We use the generator architecture and training settings proposed in
DCGAN~\cite{radford2015dcgan}. The model receives $z\in {\rm Uniform}_{d_{\rm rand}}(-1,1)$
and feeds it through one linear layer and three deconvolution layers
with batch normalization and ReLU activation in between.
The variational function is the same as the discriminator architecture
in~\cite{radford2015dcgan} and follows the structure of a convolutional neural
network with batch normalization, exponential linear
units~\cite{clevert2015elu} and one final linear layer.

{\em Results.} Figure \ref{fig:lsun} shows 16 random samples from neural
samplers trained using GAN, KL, and squared Hellinger divergences.
All three divergences produce equally realistic samples. Note that 
the difference in the learned distribution $Q_{\theta}$ arise only when
the generator model is not rich enough. 

\begin{figure}[hb!]
 \begin{center}
  \subfigure[GAN]
  {\fbox{\includegraphics[clip,width=0.32\textwidth]{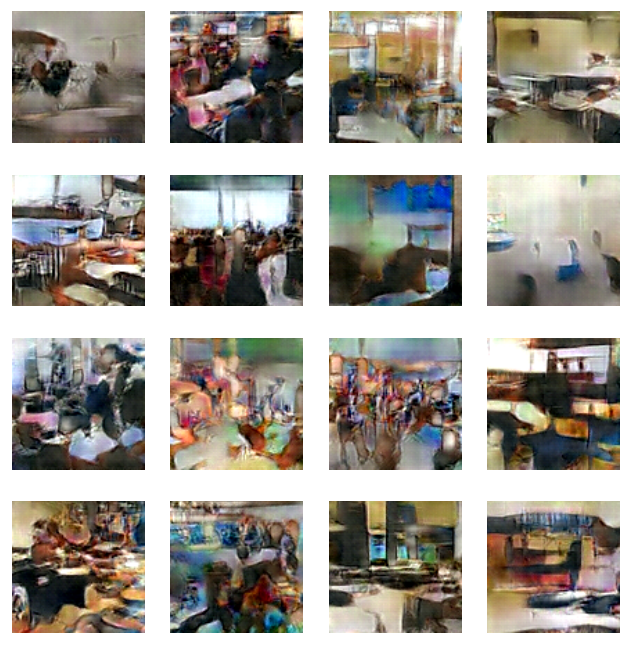}}}~\subfigure[KL]
  {\fbox{\includegraphics[clip,width=0.32\textwidth]{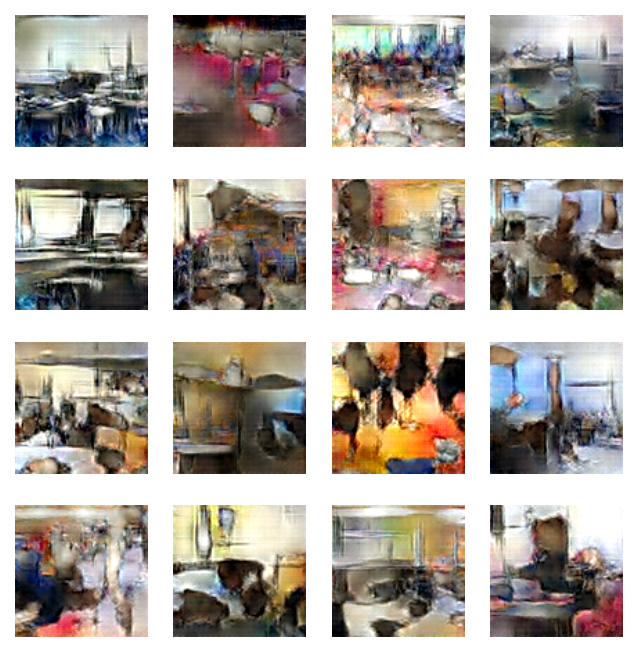}}}~\subfigure[Squared Hellinger]
  {\fbox{\includegraphics[clip,width=0.32\textwidth]{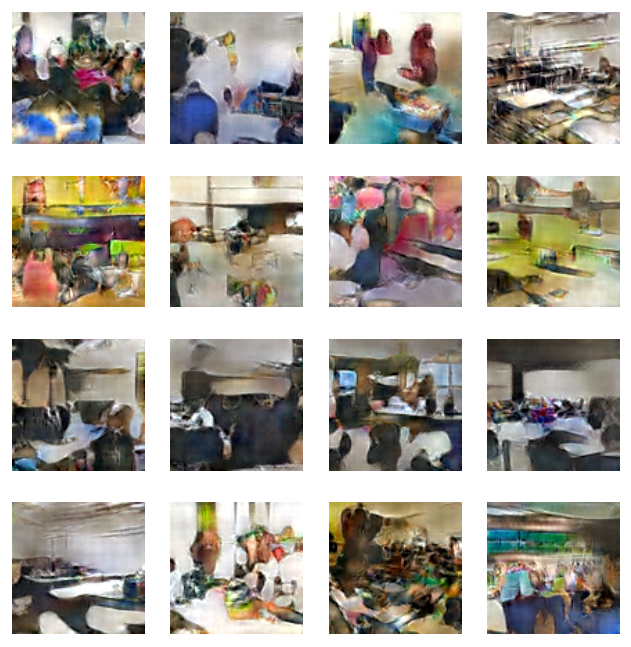}}}
  \caption{Samples from three different divergences.}
  \label{fig:lsun}
 \end{center}
\end{figure}

\section{Related Work}
We now discuss how our approach relates to existing work.
Building generative models of real world distributions is a fundamental goal
of machine learning and much related work exists.
We only discuss work that applies to neural network models.

\emph{Mixture density networks}~\cite{bishop1994mixturedensitynetworks} are
neural networks which directly regress the parameters of a finite parametric
mixture model.  When combined with a recurrent neural network this yields
impressive generative models of handwritten text~\cite{graves2013sequences}.

\emph{NADE}~\cite{larochelle2011nade} and
\emph{RNADE}~\cite{uria2013rnade} perform a
factorization of the output using a predefined and somewhat arbitrary ordering
of output dimensions.  The resulting model samples one variable at a time conditioning
on the entire history of past variables.
These models provide tractable likelihood evaluations and compelling results
but 
it is unclear how to select the factorization order in many applications .

\emph{Diffusion probabilistic models}~\cite{sohldickstein2015unsupervisedlearning}
define a target distribution as a result of a learned diffusion process which
starts at a trivial known distribution.  The learned model provides exact
samples and approximate log-likelihood evaluations.



\emph{Noise contrastive estimation} (NCE)~\cite{Gutmann10nce} is a method
that estimates the parameters of unnormalized probabilistic models by performing 
non-linear logistic regression to discriminate the data from
artificially generated noise. NCE can be viewed as a special case of GAN where the
discriminator is constrained to a specific form that depends on the model (logistic regression
classifier) and the generator (kept fixed) is providing the artificially
generated noise (see supplementary material).


The generative neural sampler models
of~\cite{mackay1995bayesianneuralnetworks} and~\cite{bishop1998gtm} did not
provide satisfactory learning methods;
\cite{mackay1995bayesianneuralnetworks} used importance sampling
and~\cite{bishop1998gtm} expectation maximization.
The main difference to GAN and to our work really is in the learning
objective, which is effective and computationally inexpensive.


\emph{Variational auto-encoders} (VAE)~\cite{kingma2013vae,rezende2014stochasticbackpropagation} are
pairs of probabilistic encoder and decoder models which map a sample to a
latent representation and back, trained using a variational Bayesian learning
objective.  The advantage of VAEs is in the encoder model which allows
efficient inference from observation to latent representation and overall they
are a compelling alternative to $f$-GANs and recent work has studied
combinations of the two approaches~\cite{makhzani2015adversarialautoencoders}


As an alternative to the GAN training objective the work~\cite{li2015mmd} and
independently~\cite{dziugaite2015mmd} considered the use of the \emph{kernel
maximum mean discrepancy}
(MMD)~\cite{gretton2007kernelmmd,gneiting2007scoringrules} as a training
objective for probabilistic models.
This objective is simpler to train compared to GAN models because there is
no explicitly represented variational function.
However, it requires the choice of a kernel
function 
and the reported results so
far seem slightly inferior compared to GAN.
MMD is a particular instance of a larger class of probability
metrics~\cite{sriperumbudur2010measuremetrics} which all take the form
$D(P,Q) = \sup_{T \in \mathcal{T}} \left|\mathbb{E}_{x \sim P}[T(x)] -
\mathbb{E}_{x \sim Q}[T(x)]\right|$, where the function class $\mathcal{T}$ is chosen
in a manner specific to the divergence.  Beyond MMD other popular metrics of
this form are the total variation metric (also an $f$-divergence), the
Wasserstein distance, and the Kolmogorov distance.


In \cite{huszar2015generativemodel} a generalisation of the GAN
objective is proposed by using an \emph{alternative Jensen-Shannon
divergence} that mimics an interpolation between the KL and the reverse
KL divergence and has Jensen-Shannon as its mid-point.
It can be shown that with $\pi$ close to $0$ and $1$ it leads to a behavior
similar the objectives resulting from the KL and reverse KL divergences
(see supplementary material).


\section{Discussion}

Generative neural samplers offer a powerful way to represent complex
distributions without limiting factorizing assumptions.
However, while the purely generative neural samplers as used in this
paper are interesting their use is limited because after training they cannot
be conditioned on observed data and thus are unable to provide inferences.

We believe that in the future the true benefits of neural samplers for
representing uncertainty will be found in discriminative models and our
presented methods extend readily to this case by providing additional inputs
to both the generator and variational function as in the conditional GAN
model~\cite{Gauthier2014}.

\emph{Acknowledgements.}  We thank Ferenc Husz\'{a}r for discussions on the
generative-adversarial approach.

{\small
\bibliographystyle{abbrvnat}
\bibliography{paper_arxiv}
}

\cleardoublepage

\begin{appendices}
\vspace{0.75cm}%
\begin{center}
{\Huge Supplementary Materials}
\end{center}
\vspace{0.75cm}%

\section{Introduction}
We provide additional material to support the content presented in the paper.
The text is structured as follows.
In Section~\ref{SecDivs} we present an extended list of f-divergences,
corresponding generator functions and their convex conjugates. In
Section~\ref{SecTheroremProof} we provide the proof of Theorem~\ref{thm:conv} from
Section~\ref{SecAlgorithms}. In Section~\ref{SecAlgorithmsRelated} we discuss the
differences between current (to our knowledge) GAN optimisation algorithms.
Section~\ref{SecGMM} provides a proof of concept of our approach by fitting
a Gaussian to a mixture of Gaussians using various divergence measures.
Finally, in Section~\ref{SecExpNN} we present the details of the network
architectures used in Section~\ref{sec:experiments} of the main text.

\section{$f$-divergences and Generator-Conjugate Pairs}\label{SecDivs}

In Table~\ref{tab:f-divergences} we show an extended list of f-divergences $D_f(P\|Q)$ together with their generators $f(u)$ and the corresponding optimal variational functions $T^{\ast}(x)$. 
For all divergences we have $f: \textrm{dom}_f \to \mathbb{R} \cup
\{+\infty\}$, where $f$ is convex and lower-semicontinuous.  Also we have
$f(1)=0$ which ensures that $D_f(P\|P)=0$ for any distribution $P$.
As shown by~\cite{goodfellow2014generativeadversarial}
GAN is related to the Jensen-Shannon divergence through
$D_{\textrm{GAN}} = 2 D_{\textrm{JS}} - \log(4)$.  The GAN generator function
$f$ does not satisfy $f(1)=0$ hence $D_{\textrm{GAN}}(P\|P) \neq 0$.

Table~\ref{tab:f-divergence-act} lists the convex conjugate functions $f^{\ast}(t)$ of the generator functions $f(u)$ in Table~\ref{tab:f-divergences}, their domains, as well as the  activation functions $g_f$ we use in the last layers of the generator networks to obtain a correct mapping of the network outputs into the domains of the conjugate functions.

The panels of Figure~\ref{fig:f} show the generator functions and the corresponding convex conjugate functions for a variety of f-divergences.

\begin{table}[h!bt]
\begin{center}
\scalebox{0.625}{%
\begin{tabular}{llll}
\toprule
Name & $D_f(P\|Q)$ & Generator $f(u)$ 
& $T^{\ast}(x)$\\ \midrule
Total variation
& $\frac{1}{2} \int |p(x)-q(x)|\,\textrm{d}x$
& $\frac{1}{2} |u-1|$
& $\frac{1}{2}{\rm sign}(\frac{p(x)}{q(x)}-1)$\\
Kullback-Leibler
& $\int p(x) \log \frac{p(x)}{q(x)} \,\textrm{d}x$
& $u \log u$
& $1+\log \frac{p(x)}{q(x)}$\\
Reverse Kullback-Leibler
& $\int q(x) \log \frac{q(x)}{p(x)}\,\textrm{d}x$
& $-\log u$
& $-\frac{q(x)}{p(x)}$\\
Pearson $\chi^2$
& $\int \frac{(q(x)-p(x))^2}{p(x)}\,\textrm{d}x$
& $(u-1)^2$
& $2(\frac{p(x)}{q(x)}-1)$\\
Neyman $\chi^2$
& $\int \frac{(p(x)-q(x))^2}{q(x)}\,\textrm{d}x$
& $\frac{(1-u)^2}{u}$
& $1 - \big[\frac{q(x)}{p(x)}\big]^{2}$ \\
Squared Hellinger
& $\int\left(\sqrt{p(x)} - \sqrt{q(x)}\right)^2 \,\textrm{d}x$
& $\left(\sqrt{u}-1\right)^2$
& $(\sqrt{\frac{p(x)}{q(x)}}-1)\cdot\sqrt{\frac{q(x)}{p(x)}}$\\
Jeffrey   
& $\int\left(p(x)-q(x)\right) \log \left(\frac{p(x)}{q(x)}\right) \,\textrm{d}x$
& $(u-1) \log u$
& $1+\log \frac{p(x)}{q(x)}-\frac{q(x)}{p(x)}$\\
Jensen-Shannon
& $\frac{1}{2} \int p(x) \log \frac{2 p(x)}{p(x)+q(x)}
  + q(x) \log \frac{2 q(x)}{p(x) + q(x)}\,\textrm{d}x$
& $-(u+1) \log \frac{1+u}{2} + u \log u$
& $\log\frac{2p(x)}{p(x)+q(x)}$\\

Jensen-Shannon-weighted
& {\small $  \int p(x) \pi \log \frac{ p(x)}{\pi p(x)+ (1-\pi)q(x)}
  + (1-\pi)q(x) \log \frac{q(x)}{\pi p(x) + (1-\pi)q(x)}\,\textrm{d}x$ }
& {\small $ \pi  u \log u - ( 1-\pi + \pi u) \log ( 1-\pi + \pi u) $ }
& {\small $\pi \log \frac{p(x)}{(1-\pi) q(x) + \pi p(x)}$ }
\\

GAN
& $\int p(x) \log \frac{2 p(x)}{p(x)+q(x)}
  + q(x) \log \frac{2 q(x)}{p(x) + q(x)}\,\textrm{d}x - \log(4)$
& $u \log u - (u+1) \log(u+1)$
& $\log\frac{p(x)}{p(x)+q(x)}$\\
$\alpha$-divergence ($\alpha \notin \{0,1\}$)
& $\frac{1}{\alpha (\alpha-1)} \int
  \left(p(x) \left[\left(\frac{q(x)}{p(x)}\right)^{\alpha}-1\right] - \alpha(q(x)-p(x))\right)
  \,\textrm{d}x$
& $\frac{1}{\alpha(\alpha-1)} \left(u^{\alpha} - 1 - \alpha(u-1)\right)$
& $\frac{1}{\alpha-1} \Big[ \big[\frac{p(x)}{q(x)}\big]^{\alpha-1} - 1\Big]$
\\
\bottomrule
\end{tabular}
}%
\end{center}
\caption{\small List of $f$-divergences $D_f(P\|Q)$ together with generator
functions and the optimal variational functions.
}
\label{tab:f-divergences}
\end{table}

\begin{table}[tb]
\begin{center}
\scalebox{0.8}{%
\begin{tabular}{lllll}
\toprule
Name & Output activation $g_f$ & $\textrm{dom}_{f^*}$ & Conjugate $f^*(t)$ & $f'(1)$
\\ \midrule
Total variation
& $\frac{1}{2} \tanh(v)$
& $-\frac{1}{2} \leq t \leq \frac{1}{2}$
& $t$
& $0$
\\
Kullback-Leibler (KL)
& $v$
& $\mathbb{R}$
& $\exp(t-1)$
& $1$
\\
Reverse KL
& $-\exp(v)$
& $\mathbb{R}_-$
& $-1-\log(-t)$
& $-1$
\\
Pearson $\chi^2$
& $v$
& $\mathbb{R}$
& $\frac{1}{4} t^2 + t$
& $0$
\\
Neyman $\chi^2$
& $1 - \exp(v)$
& $t < 1$
& $2 - 2\sqrt{1-t}$
& $0$
\\
Squared Hellinger
& $1 - \exp(v)$
& $t < 1$
& $\frac{t}{1-t}$
& $0$
\\
Jeffrey
& $v$
& $\mathbb{R}$
& $W(e^{1-t}) + \frac{1}{W(e^{1-t})} + t - 2$
& $0$
\\
Jensen-Shannon
& $\log(2) - \log(1 + \exp(-v))$
& $t < \log(2)$
& $- \log(2-\exp(t))$
& $0$
\\
Jensen-Shannon-weighted
& $ - \pi \log \pi - \log(1 + \exp(-v))$
& $t < - \pi \log \pi $
& $ (1-\pi) \log \frac{1-\pi}{ 1- \pi e^{t/\pi}}$
& $0$
\\

GAN
& $-\log(1 + \exp(-v))$
& $\mathbb{R}_-$
& $- \log(1-\exp(t))$
& $-\log(2)$
\\
$\alpha$-div. ($\alpha < 1$, $\alpha \neq 0$)
& $\frac{1}{1-\alpha} - \log(1+\exp(-v))$
& $t < \frac{1}{1-\alpha}$
& $\frac{1}{\alpha}(t(\alpha-1)+1)^{\frac{\alpha}{\alpha-1}} - \frac{1}{\alpha}$
& $0$
\\
$\alpha$-div. ($\alpha > 1$)
& $v$
& $\mathbb{R}$
& $\frac{1}{\alpha}(t(\alpha-1)+1)^{\frac{\alpha}{\alpha-1}} - \frac{1}{\alpha}$
& $0$
\\
\bottomrule
\end{tabular}
}%
\end{center}
\caption{\small Recommended final layer activation functions and critical variational
function level defined by $f'(1)$.
The objective function for training a generative neural sampler $Q_{\theta}$
given a true distribution $P$ and an auxiliary variational function $T$ is
$\min_{\theta} \max_{\omega} F(\theta,\omega) = \mathbb{E}_{x \sim
P}[T_{\omega}(x)] - \mathbb{E}_{x \sim Q_{\theta}}[f^*(T_{\omega}(x))]$.
%
For any sample $x$ the variational function produces a scalar $v(x) \in \mathbb{R}$.
The output activation provides a differentiable map $g_f: \mathbb{R} \to
\textrm{dom}_{f^*}$, defining $T(x) = g_f(v(x))$.
The critical value $f'(1)$ can be interpreted as a classification threshold
applied to $T(x)$ to distinguish between true and generated samples.
$W$ is the Lambert-$W$ product log function.
}
\label{tab:f-divergence-act}
\end{table}

\begin{figure}[t]\centering%
    \includegraphics[width=0.475\linewidth]{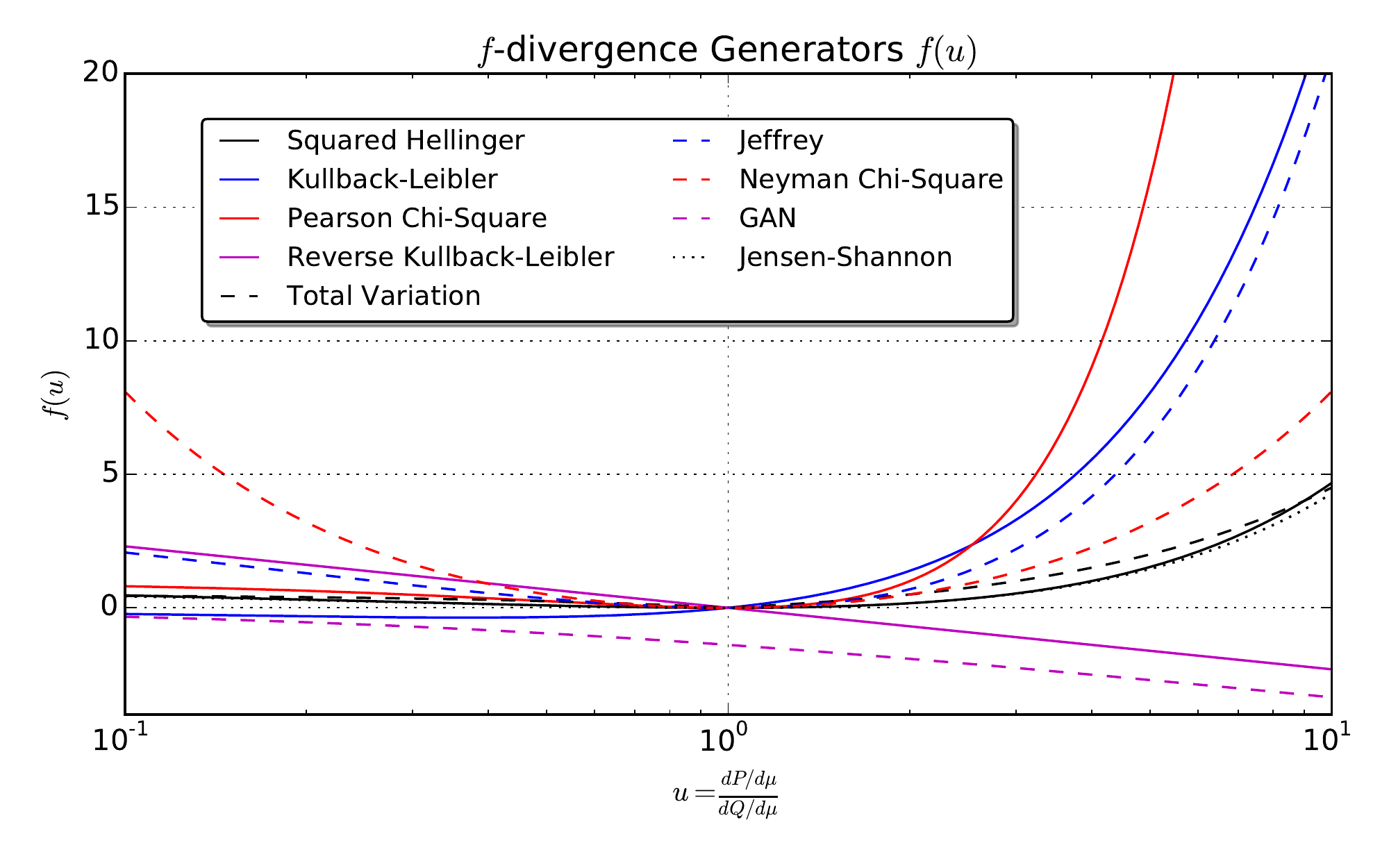}%
  \hfill%
    \includegraphics[width=0.475\linewidth]{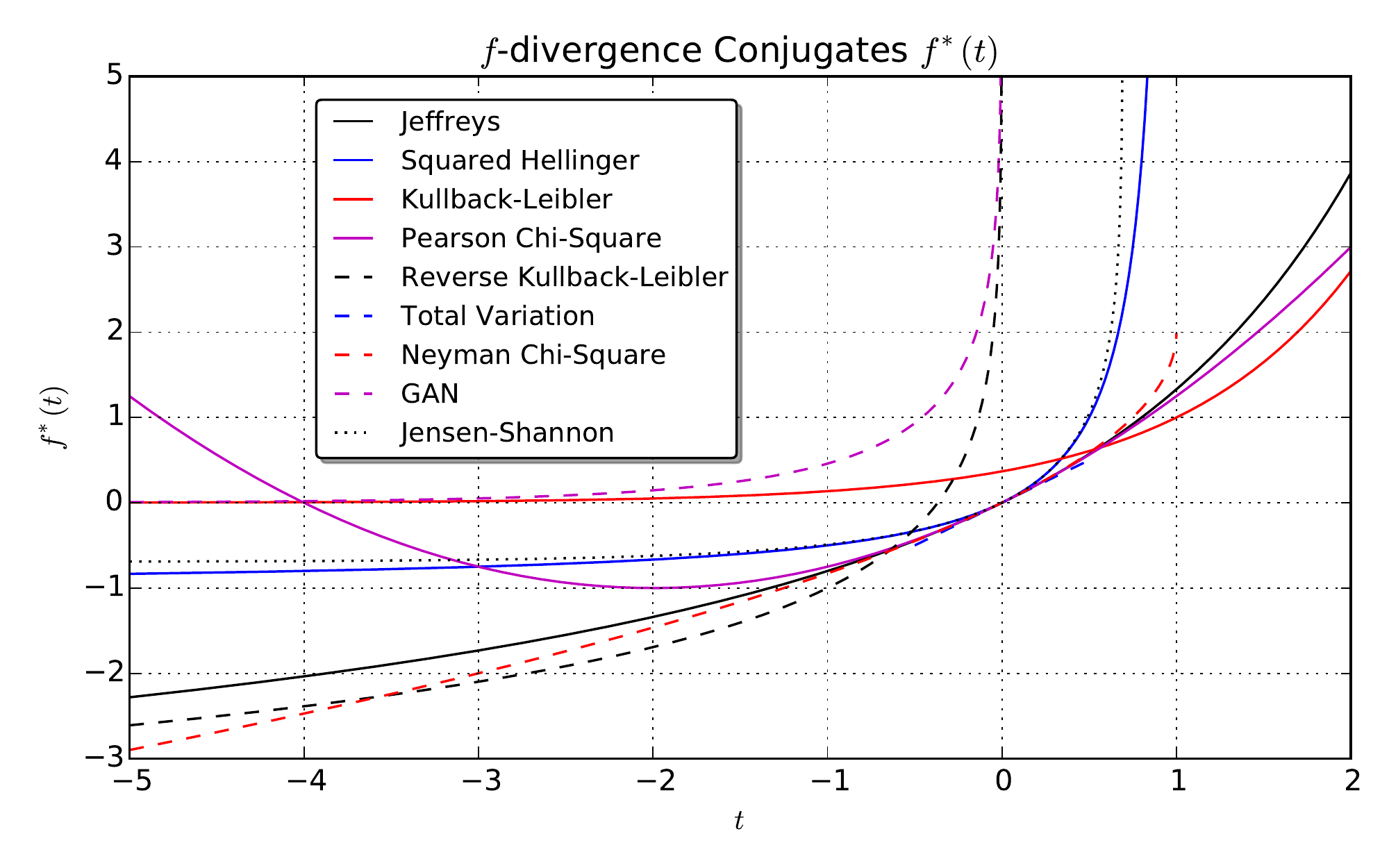}%
  \caption{Generator-conjugate $(f,f^*)$ pairs in the variational framework of
  Nguyen et al.~\cite{nguyen2010divergenceestimation}. Left: generator functions $f$ used in the $f$-divergence
  $D_f(P\|Q) = \int_{\mathcal{X}} q(x) f\left(\frac{p(x)}{q(x)}\right)\,\textrm{d}x$. Right: conjugate functions $f^*$ in the variational
  divergence lower bound $D_f(P\|Q) \geq \sup_{T \in \mathcal{T}} \int_{\mathcal{X}}
  p(x) \, T(x) - q(x) f^*(T(x)) \,\textrm{d}x$.}
\label{fig:f}%
\end{figure}

\section{Proof of Theorem~1} \label{SecTheroremProof}
In this section we present the proof of Theorem~\ref{thm:conv} from
Section~\ref{SecAlgorithms} of the main text. For completeness, we reiterate the conditions and the theorem.

We assume that  $F$ is strongly convex in
$\theta$ and strongly concave in $\omega$ such that
  \begin{align}
   \label{eq:saddle-condition1}
  \nabla_{\theta} F(\theta^{\ast},\omega^{\ast}) = 0,\quad
  \nabla_{\omega} F(\theta^{\ast}, \omega^{\ast}) = 0,  \\
   \label{eq:saddle-condition2}
\nabla_\theta^2 F(\theta,\omega)\succeq \delta
  I,\quad \nabla_\omega^2 F(\theta,\omega)\preceq -\delta I.
  \end{align}
These assumptions are necessary except for the ``strong'' part in order to define the type of saddle points that are
valid solutions of our variational framework.

We define  $\pi^{t}=(
 \theta^{t}, \omega^{t})$ and use the notation
\begin{align*}
 \nabla F(\pi) =
 \begin{pmatrix}
  \nabla_{\theta} F(\theta,\omega) \\ \nabla_{\omega} F(\theta,\omega)
 \end{pmatrix},\quad
\tilde{\nabla} F(\pi) = 
 \begin{pmatrix}
  -\nabla_{\theta} F(\theta,\omega) \\ \nabla_{\omega} F(\theta,\omega)
 \end{pmatrix}.
\end{align*}
With this notation, Algorithm~\ref{alg:method2} in the main text can be written as
\begin{align*}
 \pi^{t+1} &= \pi^{t} + \eta \tilde{\nabla} F(\pi^{t}).
\end{align*}

Given the above assumptions and notation, in Section~\ref{SecAlgorithms} of the main text we formulate the following theorem.
\begin{theorem}
\label{thm:conv}
Suppose that there is a saddle point $\pi^{\ast}=(\theta^{\ast},\omega^{\ast})$
with a neighborhood that satisfies conditions
 \eqref{eq:saddle-condition1} and \eqref{eq:saddle-condition2}.
Moreover we define $J(\pi) = \frac{1}{2}\|\nabla F(\pi)\|_2^2$ and assume 
that in the above neighborhood,
 $F$ is sufficiently smooth so that there is a constant $L>0$ and
 \begin{align}
  \label{eq:smoothness}
J(\pi') \leq J(\pi) + \inner{\nabla J(\pi)}{\pi'-\pi} + \frac{L}{2}\|\pi'-\pi\|_2^2
 \end{align}
 for any $\pi,\pi'$ in the neighborhood of $\pi^{\ast}$.
 Then using the step-size $\eta=\delta/L$ in Algorithm~\ref{alg:method2}, we have
\begin{align*}
 J(\pi^{t}) \leq \left(1-\frac{\delta^2}{2L}\right)^{t} J(\pi^{0})
\end{align*}
where $L$ is the smoothness parameter of $J$. That is, the squared norm
 of the gradient $\nabla F(\pi)$ decreases geometrically.
\end{theorem}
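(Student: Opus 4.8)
The plan is to treat $J(\pi) = \tfrac{1}{2}\|\nabla F(\pi)\|_2^2$ as a Lyapunov (potential) function and show it contracts by a fixed factor at every iteration. Since the update is $\pi^{t+1} = \pi^t + \eta\,\tilde{\nabla}F(\pi^t)$, the natural first move is to apply the smoothness bound \eqref{eq:smoothness} with $\pi' = \pi^{t+1}$ and $\pi = \pi^t$, so that $\pi' - \pi = \eta\,\tilde{\nabla}F(\pi^t)$. This yields
\begin{align*}
J(\pi^{t+1}) \leq J(\pi^t) + \eta\,\inner{\nabla J(\pi^t)}{\tilde{\nabla}F(\pi^t)} + \frac{L\eta^2}{2}\,\|\tilde{\nabla}F(\pi^t)\|_2^2,
\end{align*}
and the whole problem reduces to controlling the two quantities $\inner{\nabla J}{\tilde{\nabla}F}$ and $\|\tilde{\nabla}F\|_2^2$ inside the neighborhood.

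The key step is the cross term. By the chain rule, $\nabla J(\pi) = \nabla^2 F(\pi)\,\nabla F(\pi)$, where $\nabla^2 F$ is the full (symmetric) Hessian of $F$ in $\pi=(\theta,\omega)$. Writing $\tilde{\nabla}F = M\,\nabla F$ with the sign-flip matrix $M = \mathrm{diag}(-I,I)$, I get $\inner{\nabla J}{\tilde{\nabla}F} = (\nabla F)\T\,\nabla^2 F\,M\,\nabla F$. Expanding in the $(\theta,\omega)$ blocks and writing $g_\theta=\nabla_\theta F$, $g_\omega=\nabla_\omega F$, the two off-diagonal contributions are $g_\theta\T\,\nabla_{\theta\omega}F\,g_\omega$ and $-g_\omega\T\,\nabla_{\omega\theta}F\,g_\theta$; because the mixed partials satisfy $\nabla_{\theta\omega}F = (\nabla_{\omega\theta}F)\T$ and each term is a scalar, these cancel exactly. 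What survives is $-g_\theta\T\,\nabla_\theta^2 F\,g_\theta + g_\omega\T\,\nabla_\omega^2 F\,g_\omega$, and now the curvature conditions \eqref{eq:saddle-condition2} bite: $\nabla_\theta^2 F \succeq \delta I$ and $\nabla_\omega^2 F \preceq -\delta I$ give $\inner{\nabla J}{\tilde{\nabla}F} \leq -\delta(\|g_\theta\|_2^2 + \|g_\omega\|_2^2) = -\delta\|\nabla F\|_2^2 = -2\delta\,J(\pi)$. The sign-flip in $\tilde{\nabla}$ is precisely what converts the indefinite saddle geometry into a descent direction for $J$.

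The remaining piece is easy: since $M$ only flips signs it is orthogonal, so $\|\tilde{\nabla}F\|_2^2 = \|\nabla F\|_2^2 = 2J(\pi)$. Substituting both bounds gives the recursion $J(\pi^{t+1}) \leq (1 - 2\eta\delta + L\eta^2)\,J(\pi^t)$. Choosing $\eta = \delta/L$ makes the factor $1 - \delta^2/L$, which is at most $1 - \tfrac{\delta^2}{2L}$, and iterating over $t$ delivers the claimed geometric rate $J(\pi^t) \leq (1 - \tfrac{\delta^2}{2L})^t J(\pi^0)$.

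I expect the main obstacle to be the cross-term cancellation and sign bookkeeping in the second step: one must carefully exploit the symmetry of the Hessian together with the skew structure of $\tilde{\nabla}F$ to see that the coupling between $\theta$ and $\omega$ disappears and only the favorable curvature terms remain. A secondary caveat worth stating is that all of this only holds while the iterates remain in the neighborhood where \eqref{eq:saddle-condition1}--\eqref{eq:smoothness} are valid; since $J$ is monotonically decreasing the iterates do not escape, but this should be noted to make the argument fully rigorous.
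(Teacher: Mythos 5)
Your proof is correct and follows essentially the same route as the paper's own: apply the smoothness bound to the update $\pi^{t+1}=\pi^t+\eta\tilde{\nabla}F(\pi^t)$, compute $\inner{\nabla J}{\tilde{\nabla}F}$ via the block Hessian with the mixed-partial cross terms cancelling, invoke the curvature conditions to get the bound $-\delta\|\nabla F\|_2^2$, and then set $\eta=\delta/L$. If anything, your bookkeeping of the factor $\tfrac{1}{2}$ in $J$ is more consistent than the paper's (whose final display implicitly treats $J$ as $\|\nabla F\|_2^2$), so you arrive at the slightly stronger contraction factor $1-\delta^2/L$ before relaxing it to the stated $1-\tfrac{\delta^2}{2L}$; your closing caveat about iterates remaining in the neighborhood is a gap shared by the paper's proof as well.
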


\begin{proof}
 First, note that the gradient of $J$ can be written as
 \begin{align*}
  \nabla J(\pi) = \nabla^2 F(\pi) \nabla F(\pi).
 \end{align*}
 Therefore we notice that,
   \begin{align}
    \inner{\tilde{\nabla} F(\pi)}{\nabla J(\pi)}
 &= \inner{\tilde{\nabla} F(\pi)}{\nabla^2 F(\pi) \nabla F(\pi)}\notag\\
    &=\inner{
 \begin{pmatrix}
  -\nabla_{\theta} F(\theta,\omega) \\ \nabla_{\omega} F(\theta,\omega)
 \end{pmatrix} }{
    \begin{pmatrix}
     \nabla_\theta^2 F(\theta,\omega) & \nabla_{\theta}\nabla_{\omega}
     F(\theta,\omega)\\
     \nabla_{\omega}\nabla_{\theta} F(\theta,\omega) & \nabla_{\omega}^2  F(\theta,\omega)
    \end{pmatrix}
 \begin{pmatrix}
  \nabla_{\theta} F(\theta,\omega) \\ \nabla_{\omega} F(\theta,\omega)
 \end{pmatrix}
    }\notag\\
&=-\inner{\nabla_{\theta} F(\theta,\omega)}{\nabla_\theta^2
    F(\theta,\omega) \nabla_{\theta} F(\theta,\omega)}
    + \inner{\nabla_{\omega} F(\theta,\omega)}{\nabla_{\omega}^2
    F(\theta,\omega)\nabla_{\omega} F(\theta,\omega)}\notag\\
&\leq-\delta \left(\|\nabla_{\theta} F(\theta,\omega)\|_2^2
    +\|\nabla_{\omega} F(\theta,\omega)\|_2^2\right) = -\delta \|\nabla F(\pi)\|_2^2
   \label{eq:sufficientinnerp}
   \end{align}
 In other words, Algorithm~\ref{alg:method2} decreases $J$ by an amount
 proportional to the squared norm of $\nabla F(\pi)$.

Now combining the smoothness \eqref{eq:smoothness} with Algorithm \ref{alg:method2},
 we get
\begin{align*}
 J(\pi^{t+1}) &\leq J(\pi^{t}) + \eta\inner{\nabla
 J(\pi^{t})}{\tilde{\nabla}F(\pi^{t})}+ \frac{L\eta^2}{2}\|\tilde{\nabla}
 F(\pi^{t})\|_2^2\\
 &\leq\left(1 -\delta\eta + \frac{L\eta^2}{2}  \right)J(\pi^{t})\\
 &=\left(1 - \frac{\delta^2}{2L}\right)J(\pi^{t}),
\end{align*}
 where we used sufficient decrease \eqref{eq:sufficientinnerp} and 
 $J(\pi)=\|\nabla F(\pi)\|_2^2 =
 \|\tilde{\nabla} F(\pi)\|_2^2$ in the second inequality, and the final equality follows by taking $\eta=\delta/L$.
\end{proof}


\section{Related Algorithms}\label{SecAlgorithmsRelated}

Due to recent interest in GAN type models, there have been attempts to derive other divergence measures and algorithms.
In particular, an alternative Jensen-Shannon divergence has been derived in~\cite{huszar2015generativemodel} and a heuristic algorithm that behaves similarly to the one resulting from this new divergence has been proposed in~\cite{huszar2016reversekl}.

In this section we summarise (some of) the current algorithms and show how
they are related. Note that some algorithms use heuristics that do not
correspond to saddle point optimisation, that is, in the corresponding
maximization and minimization steps they optimise alternative objectives that
do not add up to a coherent joint objective. We include a short discussion of
\cite{Gutmann10nce} because it can be viewed as a special case of GAN.

To illustrate how the discussed algorithms work, we define the objective function 
\begin{align}\label{EqnGAN-Fab}
F(\theta,\omega; \alpha, \beta) 
  =&
  E_{x \sim P}[\log D_{\omega}(x)]
  + \alpha E_{x \sim Q_{\theta}}[\log(1-D_{\omega}(x))]
  - \beta E_{x \sim Q_{\theta}}[\log(D_{\omega}(x))], 
\end{align}
where we introduce two scalar parameters, $\alpha$ and $\beta$, to help us highlight the differences between the algorithms shown in Table~\ref{TableGANalg}. 

\begin{table}[h]
\begin{center}
\begin{tabular}{lcc}
\toprule
Algorithm & Maximisation in $\omega$ & Minimisation in $\theta$
\\
\midrule
NCE~\cite{Gutmann10nce} & $\alpha=1, \beta = 0$ &  NA  \\
GAN-1~\cite{goodfellow2014generativeadversarial} & $\alpha=1, \beta = 0$ & $\alpha = 1, \beta = 0$\\
GAN-2~\cite{goodfellow2014generativeadversarial} & $\alpha=1, \beta = 0$ & $\alpha = 0, \beta = 1$\\
GAN-3~\cite{huszar2016reversekl} & $\alpha=1, \beta = 0$ & $\alpha = 1, \beta = 1$
\\
\bottomrule
\end{tabular}
\end{center}  
\caption{Optimisation algorithms for the GAN objective~\eqref{EqnGAN-Fab}.}
\label{TableGANalg}
\end{table}

\subsubsection*{Noise-Contrastive Estimation (NCE)} 

NCE~\cite{Gutmann10nce} is a method that estimates the parameters of an unnormalised model $p(x ; \omega)$  by performing 
non-linear logistic regression to discriminate between the model and artificially generated noise. To achieve this NCE casts the estimation problem as a ML estimation in a binary classification model where the data is augmented with artificially generated data. The ``true" data items are labeled as positives while the  artificially generated data items are labeled as negatives. The discriminant function is defined as $D_{\omega}(x) = p(x ; \omega) /(p(x ; \omega) + q(x)) $ where $q(x)$ denotes the distribution of the artificially generated data, typically a Gaussian  parameterised by the empirical mean and covariance of the true data. ML estimation in this binary classification model results in an objective that has the form \eqref{EqnGAN-Fab} with $\alpha =1$ amd $\beta = 0$, where the expectations are taken w.r.t. the empirical distribution of augmented data. As a result, NCE can be viewed as a special case of GAN where the generator is fixed and we only have maximise the objective w.r.t. the parameters of the discriminator. Another slight difference is that in this case the data distribution is learned through the discriminator not the generator, however, the method has many conceptual similarities to GAN.

\subsubsection*{GAN-1 and GAN-2} 
The first algorithm (GAN-1) proposed in \cite{goodfellow2014generativeadversarial} performs a stochastic gradient ascent-descent on the objective with $\alpha=1$ and $\beta=0$, however, the authors point out that in practice it is more advantageous to minimise  $-E_{x \sim Q_{\rm \theta}}[\log D_{\omega}(x)]$ instead of $E_{x \sim Q_{\rm \theta}}[\log(1-D_{\omega}(x))]$, we denote this by GAN-2. This is motivated by the observation that in the early stages of training  when $Q_{\theta}$ is not sufficiently well fitted, $D_{\omega}$ can saturate fast leading to weak gradients in $E_{x \sim Q_{\rm \theta}}[\log(1-D_{\omega}(x))]$.  The $-E_{x \sim Q_{\rm \theta}}[\log D_{\omega}(x)]$ term, however, can provide stronger gradients and leads to the same fixed point. This heuristic can be viewed as using $\alpha=1, \beta=0$ in the maximisation step and $\alpha=0, \beta=1$ in the minimisation step\footnote{ A somewhat similar observation regarding the artificially generated data is made in \cite{Gutmann10nce}: in order to have meaningful training one should choose the artificially generated data to be close the the true data, hence the choice of an ML multivariate Gaussian.}. 

\subsubsection*{GAN-3}
In \cite{huszar2016reversekl} a further heuristic for the minimisation step is proposed. Formally, it can be viewed as a combination of the minimisation steps in GAN-1 and GAN-2. In the proposed algorithm, the maximisation step is performed similarly ($\alpha=1, \beta=0$), but the minimisation is done using $\alpha=1$ and $\beta=1$. This choice is motivated by KL optimality arguments. The author makes the observation that the optimal discriminator is
given by 
\begin{equation}
    D^{\ast}(x) = \frac{p(x)}{q_{\theta}(x) + p(x)} 
\end{equation}
and thus, close to optimality,  the minimisation of $  E_{x \sim Q_{\theta}}[\log(1-D_{\omega}(x))]
  - E_{x \sim Q_{\theta}}[\log D_{\omega}(x)] $ corresponds to the minimisation of  the reverse KL divergence $E_{x \sim Q_{\theta}}[\log (q_{\theta}(x)/p(x))]$. This approach can be viewed as choosing $\alpha=1$ and $\beta=1$ in the minimisation step.

\subsubsection*{Remarks on the Weighted Jensen-Shannon Divergence in \cite{huszar2015generativemodel}}

The GAN/variational objective corresponding to alternative Jensen-Shannon divergence measure proposed in \cite{huszar2015generativemodel} (see Jensen-Shannon-weighted in Table 1) is
\begin{align}\label{EqnJSW}
    F(\theta,\omega; \pi) 
  =&
  E_{x \sim P}[\log D_{\omega}(x)]
  -  (1-\pi) E_{x \sim Q_{\theta}}\Big[\log \frac{1-\pi}{1 - \pi D_{\omega}(x)^{1/\pi}}\Big].
\end{align}

Note that we have the $T_{\omega}(x) = \log D_{\omega}(x)$ correspondence. According to the definition of the variational objective, when $T_{\omega}$ is close to optimal then in the minimisation step the objective function is close to the chosen divergence. In this case the optimal discriminator is 
\begin{equation}
  D^{\ast}(x)^{1/\pi} = \frac{p(x)}{ (1-\pi) q_{\theta}(x) + \pi p(x)}. 
\end{equation}
The objective in \eqref{EqnJSW} vanishes when $\pi \in \{0,1 \}$, however,
when $\pi$ is only is close to $0$ and $1$, it can behave similarly to the KL
and reverse KL objectives, respectively. Overall, the connection between GAN-3
and the optimisation of \eqref{EqnJSW} can only be considered as approximate.
To obtain an exact KL or reverse KL behavior one can use the corresponding
variational objectives. For a simple illustration of how these divergences
behave see Section~\ref{sec:gmm} and Section~\ref{SecGMM} below. 

\section{Details of the Univariate Example}\label{SecGMM}
We follow up on the example in Section~\ref{sec:gmm} of the main text by presenting further details about the quality and behavior of the approximations resulting from using various divergence measures. For completeness, we reiterate the setup and then we present further results.


\begin{figure}
    \begin{center}
        \begin{tabular}{cc}
            \resizebox{0.475\textwidth}{!}{\includegraphics{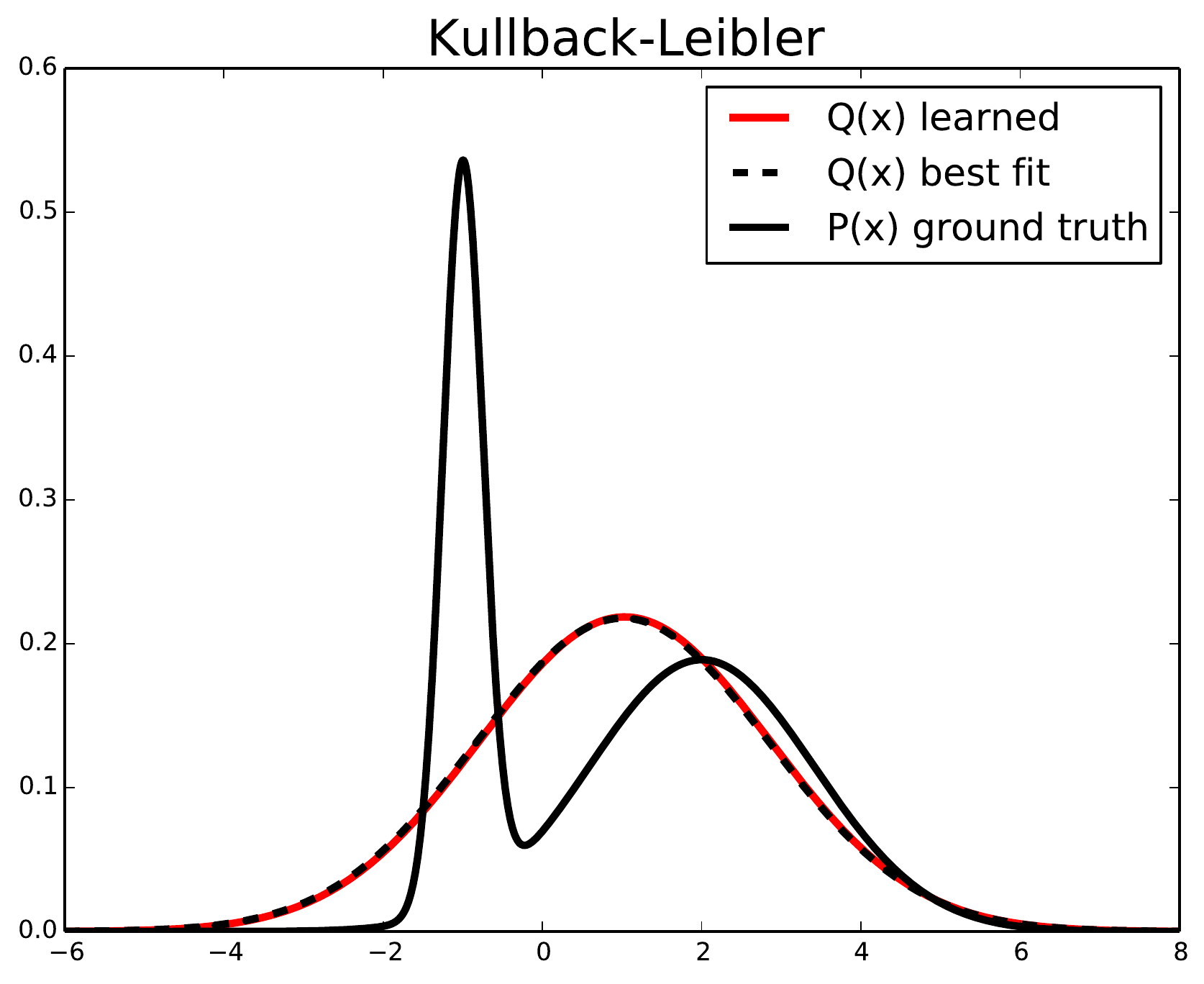}}
            &
            \resizebox{0.475\textwidth}{!}{\includegraphics{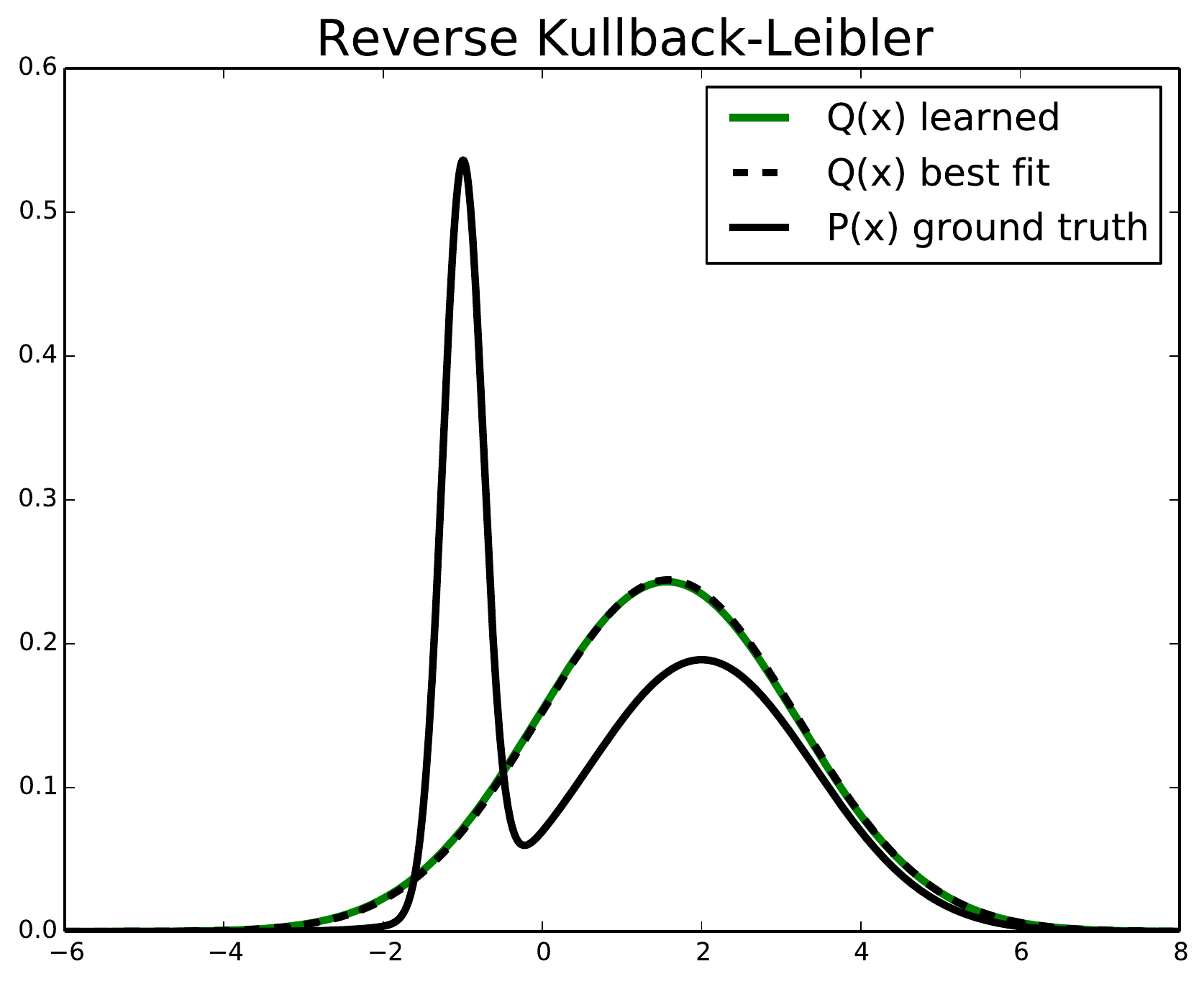}}
            \\
            \resizebox{0.475\textwidth}{!}{\includegraphics{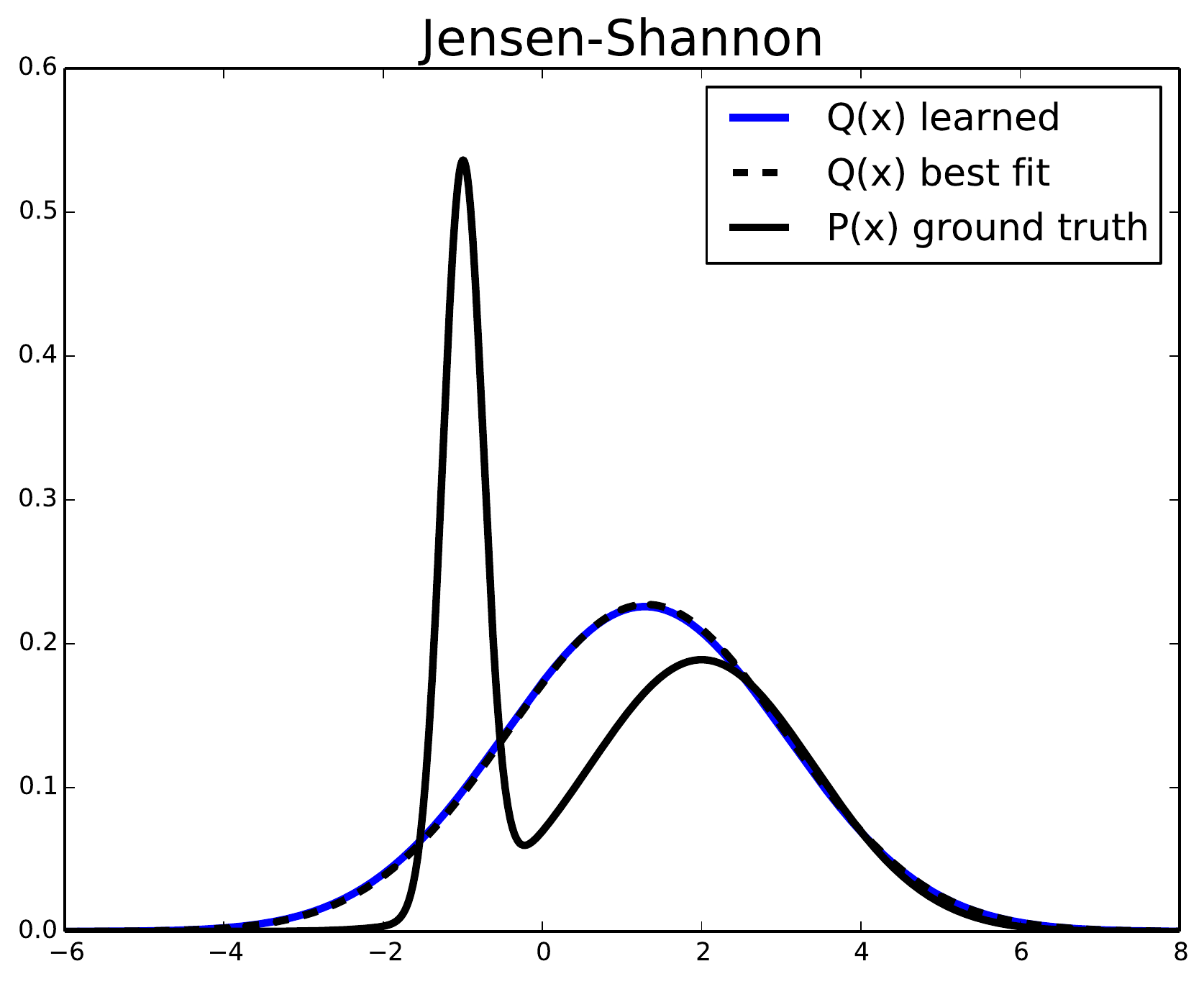}}
            &
            \resizebox{0.475\textwidth}{!}{\includegraphics{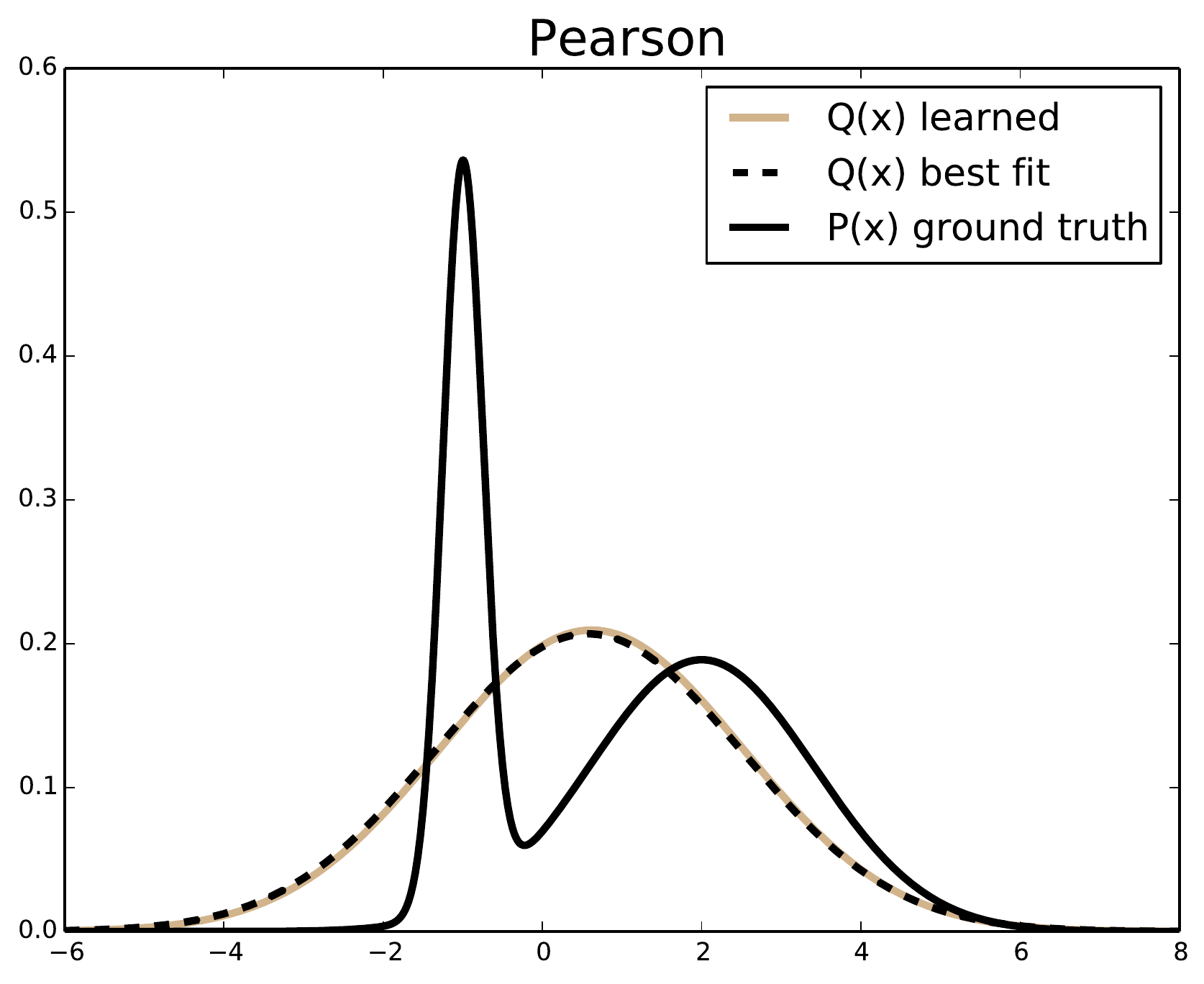}}
            \\  
            \resizebox{0.475\textwidth}{!}{\includegraphics{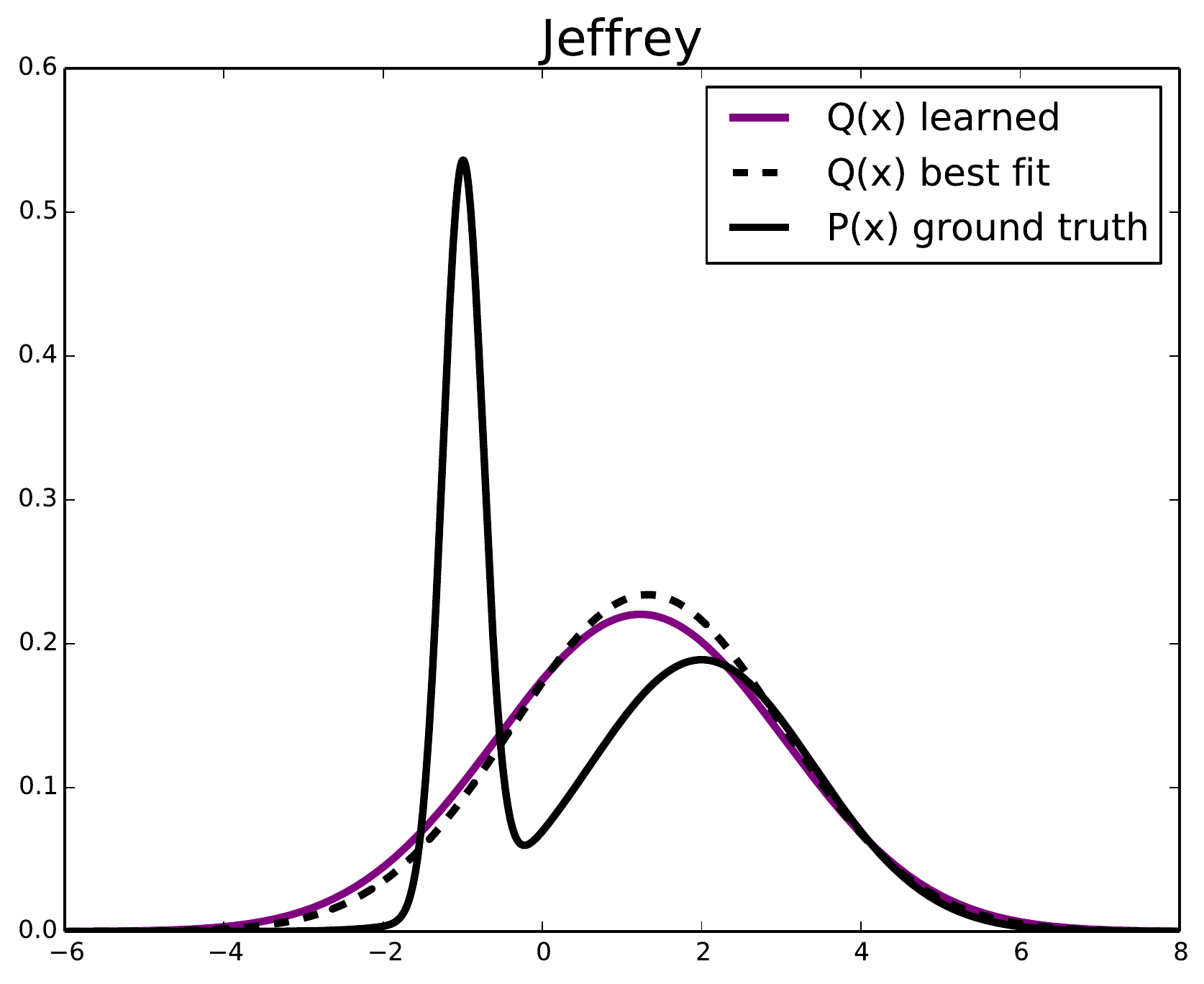}}
            &
            \resizebox{0.475\textwidth}{!}{\includegraphics{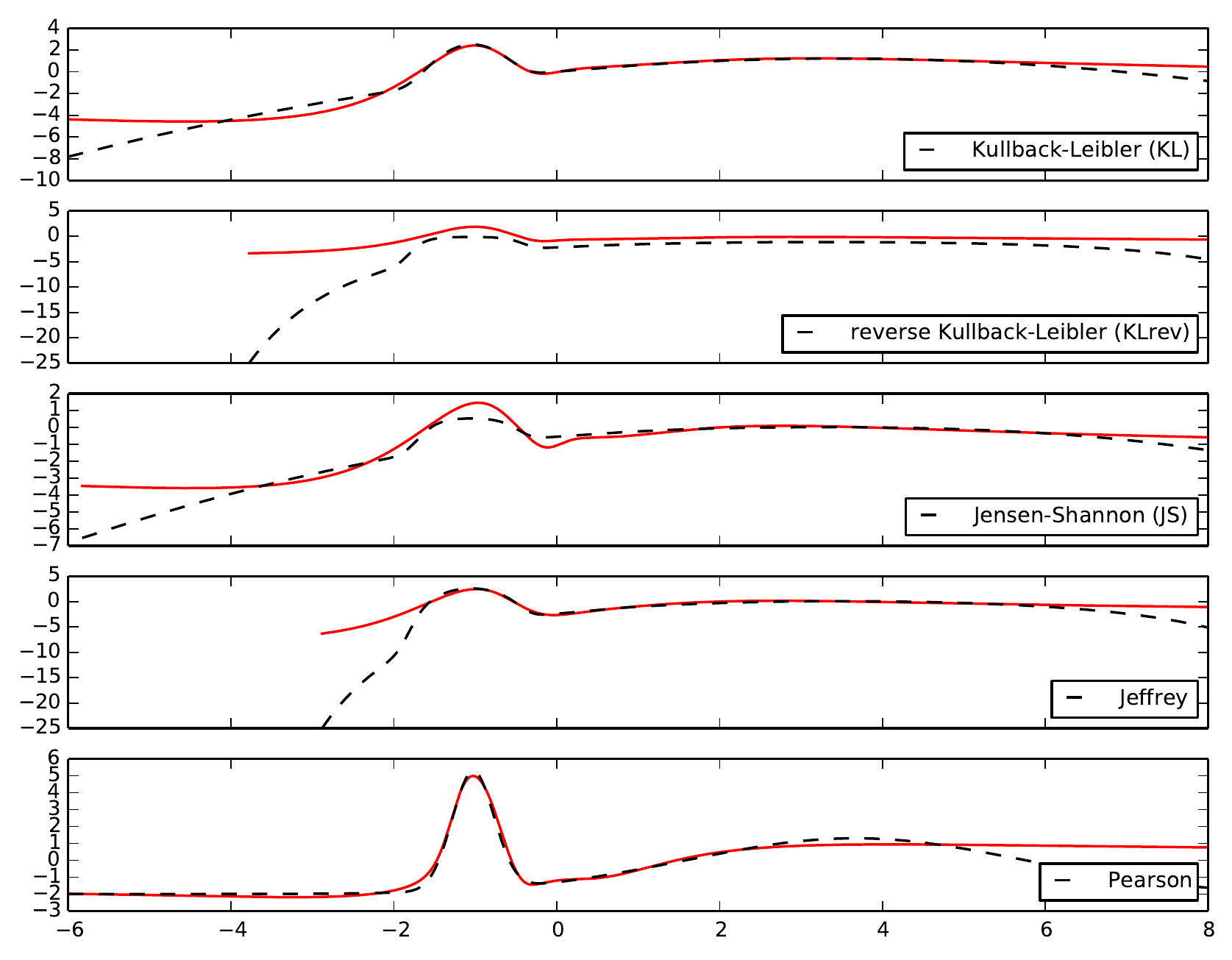}}
        \end{tabular}
    \end{center}
    \caption{\small Gaussian approximation of a mixture of Gaussians. Gaussian approximations obtained by direct optimisation of $D_{f} (p \rvert\lvert q_{\theta^{\ast}})$ (dashed-black) and the optimisation of $F(\hat{\omega}, \hat{\theta})$ (solid-colored). Right-bottom:  optimal variational functions $T^{\ast}$ (dashed) and $T_{\hat{\omega}}$ (solid-red).}
    \label{FigGMM}
\end{figure}

\textbf{Setup.}
We approximate a mixture of Gaussian \footnote{The plots on Figure~\ref{FigGMM} correspond to $p(x) = (1-w) N(x; m_1, v_1) + w N(x; m_2, v_2)$ with $w = 0.67, m_1 =-1, v_1 = 0.0625, m_2 =2, v_2 = 2$.} by learning a Gaussian distribution.
The model $Q_{\theta}$ is represented by a linear function which receives a random $z \sim \mathcal{N}(0,1)$ and outputs
\begin{equation}
  G_{\theta}(z) = \mu + \sigma z,
\end{equation}
where $\theta = (\mu, \sigma)$ are the parameters to be learned.
For the variational function $T_{\omega}$ we use the neural network 
\begin{eqnarray}
& x &
  \to \textrm{Linear(1,64)}
  \to \textrm{Tanh}
  \to \textrm{Linear(64,64)}
  \to \textrm{Tanh}
  \to \textrm{Linear(64,1)}.
\end{eqnarray}
We optimise the objective $F(\omega, \theta)$ by using the single-step
gradient method presented in Section~\ref{sec:SSG} of the main text .
In each step we sample batches of size $1024$ each for both $p(x)$ and $p(z)$
and we use a step-size of $0.01$ for updating both $\omega$ and $\theta$.
We compare the results to the best fit provided by the exact optimisation
of $D_f(P \| Q_{\theta})$ w.r.t. $\theta$, which is feasible in this case by
solving the required integrals numerically. 
We use $(\hat{\omega}, \hat{\theta})$ (learned) and $\theta^{\ast}$ (best fit)  to distinguish the parameters sets used in these
two approaches.  

\textbf{Results.}
The panels in Figure~\ref{FigGMM} shows the density function of the data
distribution  as well as the Gaussian approximations corresponding to a few $f$-divergences form Table~\ref{tab:f-divergences}.
As expected, the KL approximation covers the data
distribution by fitting its mean and variance while KL-rev has more of a
mode-seeking behavior~\cite{minka2005divergence}. The fit corresponding to the Jensen-Shannon divergence is somewhere between
KL and KL-rev.
All Gaussian approximations resulting from neural network training are
close to the ones obtained by direct optimisation of the divergence (learned vs. best fit).

In the right--bottom panel of Figure~\ref{FigGMM} we compare the variational functions
$T_{\hat{\omega}}$ and $T^{\ast}$. The latter is defined as $T^{\ast}(x) = f^{\prime}(p(x)/ q_{{\theta}^{\ast}}(x))$, see main text.
The objective value corresponding to $T^{\ast}$ is the true
divergence $D_{f}(P \rvert\lvert Q_{{\theta}^{\ast}})$.
In the majority of the cases our $T_{\hat{\omega}}$ is close to $T^{\ast}$
in the area of interest.
The discrepancies around the tails are due to the fact that
(1) the class of functions resulting from the $\rm tanh$ activation function has limited capability representing the tails, and
(2) in the Gaussian case there is a lack of data in the tails. 
These limitations, however, do not have a significant effect on the learned parameters.

\section{Details of the Experiments}\label{SecExpNN}
In this section we present the technical setup as well as the architectures we
used in the experiments described in Section~\ref{sec:experiments}.

\subsection{Deep Learning Environment}
We use the deep learning framework \emph{Chainer}~\cite{tokui2015chainer},
version 1.8.1, running on CUDA 7.5 with CuDNN v5 on NVIDIA GTX TITAN X.

\subsection{MNIST Setup}
\paragraph{MNIST Generator}
\begin{eqnarray}
& z &
	\to \textrm{Linear(100, 1200)}
	\to \textrm{BN}
	\to \textrm{ReLU}
	\to \textrm{Linear(1200, 1200)}
	\to \textrm{BN}
	\to \textrm{ReLU}
\nonumber\\
& &
	\to \textrm{Linear(1200, 784)}
	\to \textrm{Sigmoid}
\end{eqnarray}
All weights are initialized at a weight scale of $0.05$, as
in~\cite{goodfellow2014generativeadversarial}.

\paragraph{MNIST Variational Function}
\begin{eqnarray}
& x &
	\to \textrm{Linear(784,240)}
	\to \textrm{ELU}
	\to \textrm{Linear(240,240)}
	\to \textrm{ELU}
	\to \textrm{Linear(240,1)},
\end{eqnarray}
where $\textrm{ELU}$ is the exponential linear unit~\cite{clevert2015elu}.
All weights are initialized at a weight scale of $0.005$, one order of
magnitude smaller than in~\cite{goodfellow2014generativeadversarial}.

\paragraph{Variational Autoencoders}
For the variational autoencoders~\cite{kingma2013vae}, we 
used the example implementation included with
\emph{Chainer}~\cite{tokui2015chainer}. We trained
for 100 epochs with 20 latent dimensions.

\subsection{LSUN Natural Images}

\begin{eqnarray}
& z &
	\to \textrm{Linear(100, $6 \cdot 6 \cdot 512$)}
	\to \textrm{BN}
	\to \textrm{ReLU}
	\to \textrm{Reshape(512,6,6)}
\nonumber\\
& &
	\to \textrm{Deconv(512,256)}
	\to \textrm{BN}
	\to \textrm{ReLU}
	\to \textrm{Deconv(256,128)}
	\to \textrm{BN}
	\to \textrm{ReLU}
\nonumber\\
& &
	\to \textrm{Deconv(128,64)}
	\to \textrm{BN}
	\to \textrm{ReLU}
	\to \textrm{Deconv(64,3)},
\end{eqnarray}
where all $\textrm{Deconv}$ operations use a kernel size of four and a stride
of two.

\end{appendices}

\end{document}